\documentclass{article}

\usepackage[utf8]{inputenc}
\usepackage[T1]{fontenc}
\usepackage[letterpaper,margin=1in]{geometry}
\usepackage{microtype}
\usepackage{graphicx}
\usepackage{tikz}
\usetikzlibrary{positioning,arrows.meta,decorations.pathreplacing,calligraphy}
\usepackage{wrapfig}

\usepackage{amsmath}
\usepackage{amssymb}
\usepackage{amsthm}
\usepackage{mathtools}

\usepackage{natbib}
\usepackage{hyperref}
\usepackage[capitalize,noabbrev]{cleveref}
\Crefname{appendix}{Supplementary}{Supplementaries}
\crefname{appendix}{supplementary}{supplementaries}

\newcommand{\RR}{\mathbb{R}}
\newcommand{\Prob}[1]{\mathbb{P}\left( #1 \right)}
\newcommand{\Abs}[1]{\left| #1 \right|}
\newcommand{\Set}[1]{\left\{ #1 \right\}}
\newcommand{\Brack}[1]{\left( #1 \right)}
\newcommand{\Exp}[1]{\mathbb{E} #1}
\newcommand{\Expsubidx}[2]{\mathbb{E}_{#1} #2}
\newcommand{\norm}[1]{\left\|#1\right\|}
\newcommand{\cond}{\mid}
\newcommand{\eps}{\varepsilon}

\newtheorem{theorem}{Theorem}[section]

\newtheorem{corollary}[theorem]{Corollary}

\newtheorem{remark}[theorem]{Remark}
\newtheorem{proposition}[theorem]{Proposition}

\newcommand{\mcX}{\mathcal{X}}
\newcommand{\mcZ}{\mathcal{Z}}
\newcommand{\mcN}{\mathcal{N}}

\newcommand{\bZ}{\bar{Z}}
\newcommand{\bz}{\bar{z}}
\newcommand{\bv}{\mathbf{v}}
\newcommand{\bu}{\mathbf{u}}
\newcommand{\bg}{\mathbf{g}}
\newcommand{\Cov}{\mathrm{Cov}}
\newcommand{\Var}{\mathrm{Var}}
\newcommand{\Law}{\mathrm{Law}}

\title{How Deep Are Deep GPs, Really? A Sharp Threshold and a Non-Gaussian Limit for Compositional GPs}

\author{
  Mark Kozdoba$^{1}$\\
  Technion, IIT
  \and
  Shie Mannor\\
  Technion, IIT and NVIDIA
}
\date{}

\begin{document}

\maketitle
\footnotetext[1]{\texttt{markk@technion.ac.il}}

 \begin{abstract}
    Compositional priors describe the generic properties of layered functions in deep Bayesian models, where deep neural networks with random weights are a canonical example.
    In the wide-network limit, the prior is a Gaussian process with a depth-dependent kernel, and its behaviour as depth grows has been extensively studied through this kernel.
    Here, we study another case, where each layer itself is a vector valued Gaussian process, and our aim is similarly to understand the limiting behaviour of the prior as depth grows.

    Previous GP work has established that for the RBF kernel and a certain range of bandwidths $r$, the prior degenerates in the limit, converging to the set of constant functions --- which is not useful as a probabilistic model. In this paper we establish several new results.
    First, we identify a sharp bandwidth threshold $r_c(d) = \Theta(\sqrt{d})$ above which the limit is degenerate, strengthening the earlier bounds. Second, and more importantly, we show that for $r$ below the threshold $r_c(d)$ the prior converges to a limit distribution $\pi_{\bZ}$.  We also prove that these distributions are non-degenerate and non-Gaussian, with non-vanishing dependence between coordinates. In contrast to the previously known degenerate regime, deep Gaussian process priors can therefore admit non-trivial limits.

    Empirically, we verify the threshold across a range of dimensions $d$, and demonstrate a complex multimodal behaviour of the limit distributions $\pi_{\bZ}$ --- a regime that becomes increasingly narrow with $d$ and would be hard to identify without knowing the threshold.
\end{abstract}

\section{Introduction}
\label{sec:introduction}

A central approach to modelling uncertainty is Bayesian inference, in which epistemic uncertainty is carried by the posterior. The posterior, however, is shaped by the prior: the prior encodes the class of models one believes a priori plausible, and any uncertainty estimate extracted from the posterior inherits the prior's biases. Understanding the prior --- where its mass lives, what structure it prefers --- is thus a prerequisite to understanding Bayesian methods built on top of it. Some classical priors, such as shallow Gaussian processes \citep{rasmussen2006gaussian} or composition of wide neural network layers \citep{neal1996bayesian,poole2016exponential,schoenholz2017deep,hayou2019impact}, are by now well understood. In particular, wide neural network priors are themselves a single Gaussian process, with a depth-dependent kernel \citep{lee2018deep,matthews2018gaussian} (see also \Cref{sec:previous_work}), 
and the properties of the prior can be read off from the properties of that kernel. However, less is known about compositions in which each layer is itself a Gaussian process.

\paragraph{Why compose Gaussian processes.}
Draw $G_1, \ldots, G_l$ i.i.d.\ from a centred vector valued Gaussian process prior and consider the composition $G_l \circ \cdots \circ G_1$. These \emph{compositional}, or \emph{deep}, Gaussian process priors \citep{damianou2013deep} have been shown to better fit empirical data than shallow GPs in some cases~\citep{damianou2013deep,lu2019interpretable}, with intermediate layers playing the role of learned features in a way that mirrors the argument for depth in neural networks. The price of this flexibility however is complexity of analysis: past a single layer, the joint law of the composed function at any finite set of inputs is no longer Gaussian, and its qualitative dependence on depth remains poorly understood.

\paragraph{The coupled walk and synchronisation.}
The central object in this paper is the Markov chain that the compositional prior induces on any collection of $n$ input points simultaneously. Let $\bZ_1 = (\bZ_{1,1},\ldots,\bZ_{1,n}) \in (\RR^d)^n$ be $n$ starting points and iterate
\begin{eqnarray}
  \label{eq:base_chain_equation}
  \bZ_{i+1} \;=\; G_i \bZ_i \;=\; (G_i \bZ_{i,1}, \ldots, G_i \bZ_{i,n}),
\end{eqnarray}
with $G_i$ i.i.d.\ samples from a centred vector GP with rotation-invariant kernel $k(x,x') = \exp(-\norm{x-x'}^2 / (2 r^2))$. That is, $G_i:\RR^d \rightarrow \RR^d$ are random Gaussian functions (see Section \ref{sec:results} for a definition); see \Cref{fig:composition} for the structure.
Because the \emph{same} $G_i$ is applied to every coordinate, the $n$ trajectories $\Set{\bZ_{i,t}}_{i \ge 1}$, where $t\leq n$, are coupled through the shared randomness. \citet{dunlop2018how} showed that for $r$ large enough the process has a \emph{degenerate} limit: the pairwise distances $\norm{\bZ_{i,s} - \bZ_{i,t}}$ collapse to zero with depth (i.e.\ $i \rightarrow \infty$), and consequently, the prior concentrates on constant functions. Note that when the bandwidth $r$ is large, the values of the process $G_i$ are strongly correlated, and therefore the process itself is somewhat close to the constant functions. The above result then shows that composition further amplifies this, yielding a limit that coincides with the constant functions. 
 However, their argument leaves open both whether their particular  threshold for $r$ is sharp, and, more importantly, what happens on the other side of it.

 We note that the general framework of coupled walks \eqref{eq:base_chain_equation} is
 also studied in the Theory of Iterated Random Functions (see \Cref{sec:previous_work}); there, the phenomenon of a degenerate limit is known as \emph{synchronisation}, with the interpretation that all particles eventually follow the same trajectory. We will use the terms synchronisation and degenerate limit interchangeably.

\begin{wrapfigure}{l}{0.5\textwidth}
  \centering
  \begin{tikzpicture}[
      node distance = 2mm and 2.5mm,
      pt/.style = {circle, draw, fill=blue!10, inner sep=0pt, minimum size=4.2mm,
                   font=\tiny},
      ar/.style = {-{Stealth[length=1.4mm]}, thick, gray},
      lab/.style = {font=\tiny\itshape}
    ]
    \node[pt] (z11) {$\bZ_{1,1}$};
    \node[pt, right=of z11] (z12) {$\bZ_{1,2}$};
    \node[pt, right=of z12] (z13) {$\bZ_{1,3}$};
    \node[right=of z13] (zd1) {$\cdots$};
    \node[pt, right=of zd1] (z1n) {$\bZ_{1,n}$};
    \node[pt, below=of z11] (z21) {$\bZ_{2,1}$};
    \node[pt, below=of z12] (z22) {$\bZ_{2,2}$};
    \node[pt, below=of z13] (z23) {$\bZ_{2,3}$};
    \node[below=of zd1] (zd2) {$\cdots$};
    \node[pt, below=of z1n] (z2n) {$\bZ_{2,n}$};
    \node[pt, below=of z21] (z31) {$\bZ_{3,1}$};
    \node[pt, below=of z22] (z32) {$\bZ_{3,2}$};
    \node[pt, below=of z23] (z33) {$\bZ_{3,3}$};
    \node[below=of zd2] (zd3) {$\cdots$};
    \node[pt, below=of z2n] (z3n) {$\bZ_{3,n}$};
    \foreach \a/\b in {z11/z21, z12/z22, z13/z23, z1n/z2n}{
      \draw[ar] (\a) -- node[midway, right=0.5pt, lab, blue!60!black] {$G_1$} (\b);
    }
    \foreach \a/\b in {z21/z31, z22/z32, z23/z33, z2n/z3n}{
      \draw[ar] (\a) -- node[midway, right=0.5pt, lab, blue!60!black] {$G_2$} (\b);
    }
    \node[below=1.5mm of z31] {$\vdots$};
    \node[below=1.5mm of z32] {$\vdots$};
    \node[below=1.5mm of z33] {$\vdots$};
    \node[below=1.5mm of z3n] {$\vdots$};
  \end{tikzpicture}
  \caption{The compositional chain~\eqref{eq:base_chain_equation} on $n$ input points. At depth $i$ the \emph{same} random Gaussian function $G_i$ is applied to every one of the $n$ points, coupling all trajectories through the shared randomness.}
  \label{fig:composition}
\end{wrapfigure}

\paragraph{Convergence and sharp thresholds.}
We give a sharp almost-sure threshold separating a supercritical regime, in which the chain synchronises, from a subcritical regime, in which it admits a nontrivial stationary law. The critical radius is
\begin{eqnarray*}
  r_c(d) \;=\; \sqrt{2}\,e^{\psi(d/2)/2},
\end{eqnarray*}
where $\psi$ is the digamma function; asymptotically $r_c(d) \sim \sqrt{d}$ as $d \to \infty$ (see \Cref{rem:sync_d_scaling}). For $r > r_c(d)$ the coupled walk synchronises almost surely at an explicit exponential rate (\Cref{thm:sync_v_convergence_d}), improving on \citet{dunlop2018how}, whose threshold lies strictly above $r_c(d)$ for every $d$. More importantly, for $r < r_c(d)$ --- a regime not addressed by \citet{dunlop2018how} --- we show that the scalar pairwise-distance chain converges in total variation to a unique nontrivial stationary law on $(0,\infty)$ (\Cref{thm:sync_v_convergence_d}), and the full position chain $\bZ_i$ converges in total variation to a unique stationary law $\pi_{\bZ}$ on $(\RR^d)^n$ for every $n \ge 2$ (\Cref{thm:sync_Z_convergence}).

We note that \citet{dunlop2018how} also proved convergence to stationarity, but for a different class of processes. Their direct approach, of constructing Lyapunov functions directly for $\bZ$, does not seem to apply to the composition class, for which they only prove synchronization as discussed above. We avoid the issue by taking an indirect approach: observing that $\bZ$ is determined by the pairwise distances $\norm{\bZ_{i,t} - \bZ_{i,s}}$, we work with the induced chain on those distances. Methodologically, \Cref{thm:sync_Z_convergence} is an extension of \Cref{thm:sync_v_convergence_d}.

\paragraph{The limit is non-Gaussian with non-trivial dependence.}
A priori one might expect the stationary distribution $\pi_{\bZ}$ to be Gaussian --- it is built entirely from Gaussian ingredients, and each marginal $\bZ_{\infty,t}$ is standard Gaussian. Remarkably, however, the joint law of any two coordinates $(\bZ_{\infty,s}, \bZ_{\infty,t})$ is not jointly Gaussian (\Cref{prop:sync_not_gaussian}).
Moreover, we show that there is dependence between coordinates of $\pi_{\bZ}$, and that it can be estimated in terms of $r$ and $d$ (\Cref{rem:sync_indep_gaussian_comparison}). Setting $r = \lambda\, r_c(d)$, the dependence is weak at small $\lambda$ and grows strong as $\lambda$ approaches $1$. Crucially, the range of $\lambda$ in which it is visible on a finite sample of $\pi_{\bZ}$ shrinks rapidly with $d$ --- at $d = 100$ the entire non-trivial regime sits inside roughly $\lambda \in (0.99, 1)$ (\Cref{sec:experiments}). Without the bounds of \Cref{cor:sync_log_stationary} and \Cref{rem:sync_indep_gaussian_comparison} and the precise value of $r_c(d)$, this narrow strip is hard to locate --- a moderate-$\lambda$ experiment in high $d$ looks indistinguishable from the independent-coordinate Gaussian baseline. However, when the relevant scale of $1-\lambda$ is identified, $\lambda$ can be used to tune the strength of dependence. 

\paragraph{Experiments}
We numerically verify the threshold $r_c(d)$ across $d \in \Set{1, 10, 100}$, and sample from $\pi_{\bZ}$ at $\lambda$ values selected per dimension using the bounds discussed above. The samples display complex chain-specific multimodal structure, consistent with the non-Gaussianity of \Cref{prop:sync_not_gaussian}.

\paragraph{Summary of contributions.}
Our main contributions are: (i) convergence in total variation to a unique stationary law $\pi_{\bZ}$ at every $r < r_c(d)$. Until now, the only depth-asymptotic behaviour established for the composition class was synchronisation onto constants; we show that below the threshold the prior in fact admits a non-degenerate limit, qualitatively changing the picture; (ii) the structural finding that $\pi_{\bZ}$ is non-Gaussian, with coordinate dependence that does not wash out as $d \to \infty$; and (iii) a sharp almost-sure threshold $r_c(d)$ separating the synchronised regime from this non-trivial limit, with an explicit decay rate above the threshold. We verify (iii) across $d \in \Set{1, 10, 100}$, and exhibit (i)--(ii) by visualising samples from $\pi_{\bZ}$ at $\lambda$ values selected per dimension using the bounds of (iii); the samples display complex chain-specific multimodal structure.

\paragraph{Organization.}
The rest of this paper is organized as follows. \Cref{sec:previous_work} reviews related work on deep and compositional Gaussian processes, infinite-width neural-network priors, iterated random functions, and previous work on GP composition. \Cref{sec:definitions} fixes notation and recalls the objects we use. \Cref{sec:results} contains the formal statements of our main results together with proof sketches. \Cref{sec:experiments} presents numerical experiments verifying the threshold across dimensions and \Cref{sec:conclusions} discusses open directions. The  proofs are collected in \Cref{sec:proofs}.

\section{Previous Work}
\label{sec:previous_work}

\paragraph{Deep and compositional Gaussian processes.}
Deep Gaussian processes were introduced by \citet{damianou2013deep} as a hierarchical prior formed by stacking GP layers. Their expressivity --- non-stationary effective kernels, input-dependent length scales, richer induced push-forward distributions --- has been the subject of a substantial body of follow-up work, including approximate inference methods \citep{bui2016deep,salimbeni2017doubly,cutajar2017random,havasi2018inference}, conditional moment calculations \citep{lu2019interpretable}, and depth-asymptotic studies of several specific constructions \citep{duvenaud2014avoiding,dunlop2018how}. Among the constructions in this literature, the \emph{composition} class --- in which a single GP sample is applied layer-wise to the coupled input vector --- is the object of our analysis.

\paragraph{Infinite-width neural-network priors.}
A parallel mean-field literature analyses depth asymptotics of Bayesian neural-network priors with i.i.d.\ Gaussian weights and biases of variances $\sigma_w^2$ and $\sigma_b^2$. \citet{poole2016exponential, schoenholz2017deep, hayou2019impact} identify, at infinite width and for tanh activations, a synchronisation / non-synchronisation dichotomy in $(\sigma_w^2, \sigma_b^2)$ analogous to ours: in some parameter regimes the network synchronises, i.e.\ the covariance of two input points approaches $1$, while in others the covariance approaches a strictly smaller fixed point. For ReLU activations the dichotomy collapses and synchronisation is unavoidable for every choice of these parameters \citep{hayou2019impact}.

A structural difference from our work is that in this setting the entire infinite-width composition collapses to a single Gaussian process, with all depth dependence absorbed into its kernel, termed the \emph{neural network Gaussian process} (NNGP) kernel, which satisfies a deterministic depth recursion \citep{matthews2018gaussian, lee2018deep, yang2019scaling}. The synchronisation behaviour is therefore a property of the depth-evolution of this kernel. Thus, the setup here can be considered as a single Gaussian process whose kernel encodes the entire depth dependence. In contrast, the composition we analyse is a genuine composition of many GPs, and is non-Gaussian at every finite depth. In that setting \Cref{thm:sync_v_convergence_d} establishes the existence of a non-trivial limiting law, and we further show that this law is also non-Gaussian.

\paragraph{Non-Gaussian limits of deep priors.}
Two recent lines of work in the deep-neural-network literature establish non-Gaussian limits in different parameter regimes. \citet{bordino2023infinitely}, building on \citet{peluchetti2020stable}, replace Gaussian weights with heavy-tailed Stable weights and obtain Stable-process limits in the infinite-width regime. \citet{hayou2022infinitedepth} fix the width and let depth grow in a residual architecture, obtaining continuous-time SDE limits whose form depends on the activation function. Both routes deliver non-Gaussianity by changing the mechanism behind the standard Gaussian-limit picture: \citet{bordino2023infinitely} change the noise mechanism, replacing Gaussian weights with Stable ones to begin with; \citet{hayou2022infinitedepth} change the iteration mechanism, since the residual block is additive rather than compositional and is rescaled with the total depth. These architectures are different from our approach, which is purely compositional and uses only Gaussian ingredients. The non-Gaussianity in \eqref{eq:base_chain_equation} arises directly from the GP composition mechanism.

\paragraph{Iterated random functions.}
Another area in which the chain \eqref{eq:base_chain_equation} appears is the theory of iterated random functions \citep{diaconis1999iterated,wushao2004limit,stenflo2012survey}, which studies Markov chains obtained by iterating maps $G_i$ drawn i.i.d.\ from a distribution on function space. The classical theory focuses on contraction-on-average criteria and backward-iteration limits, primarily for finite-dimensional parametrised families such as random affine maps or Kesten-type perpetuities. The specific setting we study --- $G_i$ a sample from a Gaussian process with spatially stationary kernel --- gives a genuinely infinite-dimensional random function with non-trivial spatial correlations, and the sharp-threshold / non-Gaussian-limit phenomena established here are not addressed in the classical literature.

\paragraph{On the composition class.}
The closest technical precedent to our work is \citet{dunlop2018how}, working with the scalar pairwise-distance chain (analogous to $v$ in \Cref{thm:sync_v_convergence_d}). They prove synchronization in $L^2$ at threshold $r > \sqrt{d}$, which upgrades to almost-sure convergence but only at this cruder $L^2$-driven threshold. \Cref{thm:sync_v_convergence_d} sharpens the a.s.\ threshold to $r_c(d) = \sqrt{2}\, e^{\psi(d/2)/2}$, opening the window $r \in (r_c(d), \sqrt{d})$ of almost-sure synchronization that is invisible to any $L^2$ argument. As noted in the introduction, $r_c(d) = \Theta(\sqrt{d})$, with $r_c(d) < \sqrt{d}$ strictly for every $d \ge 1$ but $r_c(d)/\sqrt{d} \to 1$ as $d \to \infty$ (\Cref{rem:sync_d_scaling}); the gap with Dunlop's threshold therefore closes with dimension. More importantly, \citet{dunlop2018how} do not address the subcritical regime $r < r_c(d)$, which is covered by \Cref{thm:sync_v_convergence_d} and, along with \Cref{thm:sync_Z_convergence,prop:sync_not_gaussian}, is our principal contribution beyond their work.

As discussed earlier, \citet{dunlop2018how} show ergodicity results (their Theorem~8), but for a \emph{different} construction (Paciorek-style hierarchical kernels), and these do not transfer to the composition class.

\citet{lu2019interpretable} compute, in closed form, the second and fourth moments of a single two-layer composition for several specific kernel families. These are one-step identities and cannot be iterated to larger depth: the moments at layer $L+1$ depend on the full distribution of the layer-$L$ output, not on its moments alone, since the chain is non-Gaussian at every finite depth and no finite collection of moments closes the recursion.

\section{Notation and preliminaries}
\label{sec:definitions}

In this section we collect a few standard objects used throughout. The \emph{digamma function} $\psi$ is the logarithmic derivative of the gamma function, $\psi(z) := \Gamma'(z) / \Gamma(z)$ (\cite{abramowitz1964handbook}). The \emph{Euler--Mascheroni constant} is $\gamma := -\psi(1) \approx 0.5772$, and we will use the value $\psi(1/2) = -\gamma - 2\log 2$.

For $d \ge 1$, the \emph{chi-squared distribution with $d$ degrees of freedom}, denoted $\chi^2_d$, is the law of $\sum_{k=1}^d g_k^2$ for $g_1, \ldots, g_d \sim \mcN(0, 1)$ i.i.d.\ standard Gaussians (\cite{abramowitz1964handbook}). Its Lebesgue density on $(0, \infty)$ is
 $ f_d(x) \;=\; \frac{1}{2^{d/2}\,\Gamma(d/2)}\, x^{d/2 - 1}\, e^{-x/2},$
and the logarithmic moment is $\Exp{\log X} = \psi(d/2) + \log 2$ for $X \sim \chi^2_d$. In particular $\chi^2_1$ is the law of $g^2$ for a single $g \sim \mcN(0,1)$, and $\Exp{\log g^2} = \psi(1/2) + \log 2 = -\gamma - \log 2$.

For matrices $A \in \RR^{m \times m}$ and $B \in \RR^{p \times p}$, the \emph{Kronecker product} $A \otimes B \in \RR^{mp \times mp}$ is the block matrix $(A \otimes B)_{(i,k),(j,l)} = A_{ij} B_{kl}$ for $1 \le i, j \le m$ and $1 \le k, l \le p$. We use it to describe joint Gaussian laws on $(\RR^d)^n$: if $R \in \RR^{n \times n}$ is positive semidefinite and we identify $\RR^{nd}$ with $(\RR^d)^n$ by stacking, then $\mcN(0,\, R \otimes I_d)$ is the law of a random vector $\bz = (z_1, \ldots, z_n) \in (\RR^d)^n$ whose blocks are jointly centred Gaussian with cross-covariance $\Cov(z_s, z_t) = R_{st}\, I_d$; equivalently, the $d$ output coordinates are i.i.d.\ across $k = 1, \ldots, d$, each an $\RR^n$-valued centred Gaussian with covariance $R$.

\section{Results}
\label{sec:results}

We consider the chain \eqref{eq:base_chain_equation} driven by Gaussian process innovations. Specifically, each $G_i$ is sampled i.i.d.\ from a centred vector-valued Gaussian process on $\RR^d$ whose $d$ output coordinates $G_i^{(1)}, \ldots, G_i^{(d)}$ are themselves i.i.d.\ scalar centred GPs with the rotation-invariant RBF kernel
\begin{eqnarray*}
  k(x, x') \;=\; \phi(\norm{x - x'}), \qquad \phi(t) \;:=\; e^{-t^2/(2r^2)}.
\end{eqnarray*}
This setting is similar to the one considered in \cite{dunlop2018how}, although they consider a slightly more general family of kernels $k$.
At any fixed $x \in \RR^d$ the evaluation has law $G_i(x) \sim \mcN(0, \phi(0)\, I_d) = \mcN(0, I_d)$, independent of $\bZ_i$, so every marginal trajectory satisfies $\bZ_{i,t} \sim \mcN(0, I_d)$ at every depth $i \ge 1$. The depth-dependence lies entirely in the \emph{joint} law across the $n$ trajectories.

The first object we study is the pairwise distance for a fixed pair of coordinates $s \ne t$, held fixed throughout this and the next subsection. Let $V_i := \bZ_{i,s} - \bZ_{i,t} \in \RR^d$ and $v_i := \norm{V_i}$. Conditional on $\bZ_i$, the pair $(\bZ_{i+1,s}, \bZ_{i+1,t})$ is jointly centred Gaussian with cross-covariance $\phi(v_i)\, I_d$, so
\begin{eqnarray}
  \label{eq:kernel_corr_converegnce_d}
  v_{i+1}^2 \;=\; 2\Brack{1 - \phi(v_i)}\, X_i, \qquad X_i \sim \chi^2_d,
\end{eqnarray}
with $X_i$ independent of $v_i$ by rotation invariance. Because the $G_i$ are i.i.d.\ across $i$, the innovations $\Set{X_i}_{i \ge 1}$ at this fixed pair are i.i.d., and the scalar chain $\Set{v_i^2}$ is Markov on $[0, \infty)$.

The behaviour of \eqref{eq:kernel_corr_converegnce_d} is governed by the local contraction rate near $0$: for $v_i$ small, $1 - \phi(v_i) \approx v_i^2 / (2r^2)$, so
\begin{eqnarray}
  \label{eq:near_origin_lin}
  v_{i+1}^2 \;\approx\; \frac{v_i^2}{r^2}\, X_i \qquad \text{for small } v_i,
\end{eqnarray}
and on the log-chain $L_i := \log v_i^2$ the increment $L_{i+1} - L_i \approx \log X_i - 2\log r$ is i.i.d.\ with mean
\begin{eqnarray}
  \label{eq:rho_d_def}
  \rho_d \;:=\; \psi(d/2) + \log 2 - 2\log r
\end{eqnarray}
(using $\Exp{\log X} = \psi(d/2) + \log 2$; see \Cref{sec:definitions}). The sign of $\rho_d$ separates two regimes, made precise in the following theorem.

\begin{theorem}[Sharp dichotomy for the scalar pairwise-distance chain]
  \label{thm:sync_v_convergence_d}
  Let
  \begin{eqnarray*}
    r_c(d) \;:=\; \sqrt{2}\, e^{\psi(d/2)/2}
  \end{eqnarray*}
  (where $\psi$ is the digamma function), and let $\Set{v_i^2}$ be the Markov chain on $[0, \infty)$ given by \eqref{eq:kernel_corr_converegnce_d}.
  \begin{itemize}
    \item[(i)] If $r > r_c(d)$, then for every $v_1 \in \RR$, $v_i \to 0$ almost surely, exponentially fast: with $\rho_d$ as in \eqref{eq:rho_d_def},
      \begin{eqnarray*}
        \limsup_{i \to \infty} \frac{1}{i}\, \log v_i^2 \;\le\; \rho_d \;=\; \psi(d/2) + \log 2 - 2\log r \;<\; 0 \qquad \text{a.s.}
      \end{eqnarray*}
    \item[(ii)] If $r < r_c(d)$ and $v_1 \ne 0$, then $v_i \not\to 0$ almost surely, $\Set{v_i^2}$ admits a unique nontrivial stationary distribution $\pi^{\bv}$ on $(0, \infty)$, and the law of $v_i^2$ converges to $\pi^{\bv}$ in total variation.
  \end{itemize}
\end{theorem}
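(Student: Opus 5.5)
Both parts go through the log-chain $L_i = \log v_i^2$, which satisfies exactly
\begin{eqnarray*}
  L_{i+1} \;=\; L_i + \log X_i + h(L_i), \qquad h(L) \;:=\; \log\frac{2(1-\phi(e^{L/2}))}{e^{L}} ,
\end{eqnarray*}
so the chain is a random walk with i.i.d.\ increments $\log X_i$ plus a deterministic drift $h$. The basic facts about $h$ are these. Since $1-e^{-u}\le u$, we have $2(1-\phi(v)) \le v^2/r^2$, so $h(L)\le -2\log r$ for every $L$, and $h(L)\to -2\log r$ as $L\to-\infty$. As $L\to+\infty$, $h(L)\to -\infty$ because $2(1-\phi)\le 2$.

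\textbf{Part (i).} Suppose $r>r_c(d)$. If $v_1=0$ the chain stays at $0$. Otherwise the bound on $h$ gives
\begin{eqnarray*}
  L_{i} \;\le\; L_1 + \sum_{j<i} \Brack{\log X_j - 2\log r}.
\end{eqnarray*}
Apply the strong law of large numbers, using $\Exp{\Abs{\log X}}<\infty$ for $\chi^2_d$. This yields $\limsup_i L_i/i \le \rho_d<0$ almost surely. Hence $v_i\to0$ exponentially fast. This part is routine.

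\textbf{Part (ii).} Suppose $r<r_c(d)$, so $\rho_d>0$. I would apply Harris-chain theory (Meyn--Tweedie) to $L_i$ on $\RR$. There are four steps.

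\emph{Irreducibility and aperiodicity.} The transition kernel from $L$ is the law of $\log X$ shifted by $L+h(L)$. The map $L\mapsto L+h(L)=\log 2(1-\phi(e^{L/2}))$ is continuous. The density of $\log X$ is positive and continuous on all of $\RR$. So every compact set is small (indeed petite), the chain is Lebesgue-irreducible, and it is aperiodic.

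\emph{Drift condition.} This is the main obstacle. There are two dangerous directions.
\begin{itemize}
  \item[$\bullet$] As $L\to+\infty$, the next value $\log 2(1-\phi)+\log X \le \log 2+\log X$ regardless of $L$. So the Lyapunov function $V(L)=e^{\beta L}$ for small $\beta>0$ (chosen so that $\Exp{X^\beta}<\infty$) has $\Exp{V(L_{i+1})\cond L_i}$ bounded. That suffices there.
  \item[$\bullet$] As $L\to-\infty$, the drift is $\rho_d + (h(L)+2\log r)$, which tends to $\rho_d>0$. Take $V(L)=e^{-\alpha L}$ with small $\alpha>0$. Then
  \begin{eqnarray*}
    \Exp{V(L_{i+1})\cond L_i=L} \;=\; V(L)\, e^{-\alpha h(L)}\, \Exp{X^{-\alpha}}.
  \end{eqnarray*}
  Expanding in $\alpha$, we have $r^{2\alpha}\Exp{X^{-\alpha}} = 1-\alpha\rho_d+O(\alpha^2)<1$ for small $\alpha$. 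Note that $\Exp{X^{-\alpha}}<\infty$ requires $\alpha<d/2$. Since $e^{-\alpha h(L)}\to r^{2\alpha}$ as $L\to-\infty$, the ratio is eventually at most some $\kappa<1$ for $L\le -M$.
\end{itemize}
Combining, $V(L)=e^{-\alpha L}+e^{\beta L}$ satisfies $PV\le \kappa V + b\,\mathbf{1}_{[-M,M]}$.

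\emph{Conclusion from drift.} Geometric ergodicity then gives a unique stationary law for $L$, and convergence in total variation from every starting point $L_1\in\RR$, that is, every $v_1\neq0$. Pushing forward by $L\mapsto e^{L}$ gives $\pi^{\bv}$ on $(0,\infty)$ and total-variation convergence of $v_i^2$. Because the convergence is in total variation and $\pi^{\bv}$ puts no mass at $0$, we get $\Prob{v_i<\eps}\to \pi^{\bv}(0,\eps^2)$, which is small.

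\emph{Almost-sure non-convergence.} Positive Harris recurrence ensures that $L_i$ returns to $[-M,M]$ infinitely often almost surely. Therefore $v_i\not\to0$ almost surely.

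The delicate points are choosing $\alpha$ small enough to satisfy $\alpha<d/2$ together with the negative-moment expansion, and uniformly controlling $h$ near $-\infty$. The latter is handled by continuity of $h$ and its limit $-2\log r$.
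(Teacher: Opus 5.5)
Your proof is correct. Part (i) is exactly the paper's argument, but part (ii) follows a genuinely different route.

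\textbf{The paper's route.} The paper uses the piecewise-linear tent function $V(L) = (L_0 - L)^+ + (L - L^0)^+$ with Foster's additive drift condition. On the left tail it uses the mean increment $\rho_d - \eps$ plus a small correction $\Exp{(\log X - M)^+}$. On the right tail it uses the state-independent envelope $u_{i+1} \le 2X_i$, with a buffer strip $[L^0, L^0 + 2\kappa]$. It then invokes Foster's criterion and the Aperiodic Ergodic Theorem. It proves $v_i \not\to 0$ a.s.\ by a separate pathwise argument: the SLLN applied on $\Set{\tau = \infty}$, plus strong-Markov iteration of the hitting times of $(\delta,\infty)$.

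\textbf{Your route.} You use the exponential function $V(L) = e^{-\alpha L} + e^{\beta L}$, i.e.\ $v^{-2\alpha} + v^{2\beta}$, with a multiplicative (geometric) drift. The left-tail contraction comes from $\frac{d}{d\alpha}\log\bigl(r^{2\alpha}\Exp{X^{-\alpha}}\bigr)\big|_{\alpha=0} = -\rho_d < 0$. You read off a.s.\ non-convergence directly from Harris recurrence: $[-M,M]$ has positive Lebesgue measure, so it is visited infinitely often from every start.

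\textbf{What each buys.} The paper's tent function needs only $\Exp{\Abs{\log X}} < \infty$, so it is insensitive to the tails of the innovation. Yours needs $\Exp{X^{-\alpha}}$ and $\Exp{X^{\beta}}$ finite for some small $\alpha, \beta > 0$; this holds for $\chi^2_d$, with $\alpha < d/2$ as you note. In exchange you get strictly more:
\begin{itemize}
  \item[(1)] $V$-uniform geometric ergodicity, i.e.\ an exponential rate in total variation with a prefactor of order $v_1^{-2\alpha} + v_1^{2\beta}$.
  \item[(2)] $\pi^{\bv}$-integrability of $v^{-2\alpha} + v^{2\beta}$. This is stronger than the bound $\Expsubidx{\pi^{\bv}}{\Abs{L}} < \infty$ that the paper later needs in \Cref{cor:sync_log_stationary}.
  \item[(3)] Combined with the mixture bound \eqref{eq:sync_Z_tv_bound}, geometric (non-uniform) convergence of $\Law(\bZ_i)$ to $\pi_{\bZ}$ for $n = 2$. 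The paper lists the rate of this convergence among its open questions.
\end{itemize}
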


\begin{remark}
  \label{rem:sync_initial_condition}
  The asymptotic behaviour is independent of the initial condition $v_1$: in (i) the exponential decay holds for every $v_1 \in \RR$ with a $v_1$-independent rate bound.
\end{remark}

\paragraph{Sketch: supercritical regime ($\rho_d < 0$).}
The pointwise inequality $1 - e^{-x} \le x$ upgrades the near-origin linearization \eqref{eq:near_origin_lin} to a \emph{global} bound $v_{i+1}^2 \le v_i^2\, X_i / r^2$, valid for every $v_i$. Iterating and taking logs gives $L_i \le L_1 + \sum_{j<i}(\log X_j - 2\log r)$; the Kolmogorov SLLN on the i.i.d.\ sequence $\Set{\log X_j - 2\log r}$ then forces $\limsup_i L_i/i \le \rho_d < 0$ a.s., so $v_i \to 0$ exponentially fast at rate at least $|\rho_d|$.

\paragraph{Sketch: subcritical regime ($\rho_d > 0$).}
Near $L_i = -\infty$ the linearization gives positive mean increment $\rho_d$, so the SLLN forces the log-chain to leave any far-left region in finite time; thus $v_i \not\to 0$. At the other end, the global bound $v_{i+1}^2 \le 2 X_i$ (from $1 - e^{-x} \le 1$) caps the chain by an i.i.d.\ envelope independent of $v_i$, providing downward drift from far right. Using these facts we construct a Foster--Lyapunov function decreasing at both ends of $L$; together with $\psi$-irreducibility from the strictly positive transition density on $(0, \infty)$, standard arguments then yield positive Harris recurrence and a unique invariant probability on $(0, \infty)$ (see \cite{meyn1993markov}).

\paragraph{Convergence of $\bZ_i$ for general $n$.}
\Cref{thm:sync_v_convergence_d} describes the scalar dynamics at any single pair $(s, t)$. Controlling the full position chain $\bZ_i \in (\RR^d)^n$ requires more: the vector of pairwise distances $(u_{st}^{(i)})_{s<t}$ is itself Markov on $(0, \infty)^{\binom{n}{2}}$, the marginal chains are coupled through the shared GP innovation $G_i$, and establishing a unique joint stationary law needs a Foster--Lyapunov argument controlling all pairs simultaneously. The argument is carried out in the proofs via a sum-of-squared-logs Lyapunov function, and yields a unique joint stationary law for $\bu^{(i)}$ whose pairwise coordinate marginals all coincide with $\pi^{\bv}$. The convergence of $\bu^{(i)}$ then implies convergence of $\bZ_i$ itself.

\begin{theorem}[TV convergence of $\bZ_i$, general $n$]
  \label{thm:sync_Z_convergence}
  Let $\mcZ = \RR^d$, $n \ge 2$, and $G_i$ as in \Cref{thm:sync_v_convergence_d}; assume $r < r_c(d)$. The pairwise-distance chain
  \begin{eqnarray*}
    \bu^{(i)} \;:=\; \Brack{u_{st}^{(i)}}_{s<t}, \qquad u_{st}^{(i)} := \norm{\bZ_{i,s} - \bZ_{i,t}}^2,
  \end{eqnarray*}
  is Markov on $(0,\infty)^{\binom{n}{2}}$ and admits a unique stationary distribution $\pi^{\bu}$ whose pairwise coordinate marginals all equal $\pi^{\bv}$ (the scalar stationary law from \Cref{thm:sync_v_convergence_d}~(ii)). The full position chain $\Set{\bZ_i}$ on $(\RR^d)^n$ admits a unique stationary law
  \begin{eqnarray}
    \label{eq:sync_Z_limit_measure}
    \pi_{\bZ} \;:=\; \int_{(0,\infty)^{\binom{n}{2}}} \mcN\!\bigl(0,\, R(\bu) \otimes I_d\bigr)\, d\pi^{\bu}(\bu),
  \end{eqnarray}
  with $R(\bu)_{st} = \phi(\sqrt{u_{st}})$, and for every initial condition $\bZ_1 \in (\RR^d)^n$ with $\bZ_{1,s} \ne \bZ_{1,t}$ for all $s \ne t$,
  \begin{eqnarray}
    \label{eq:sync_Z_tv_bound}
    \norm{\Law(\bZ_i) - \pi_{\bZ}}_{\mathrm{TV}} \;\le\; \norm{\Law(\bu^{(i-1)}) - \pi^{\bu}}_{\mathrm{TV}} \;\xrightarrow{i \to \infty}\; 0.
  \end{eqnarray}
\end{theorem}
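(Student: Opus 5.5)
The proof splits into three parts: the Markov property of $\bu^{(i)}$, existence and uniqueness of $\pi^{\bu}$ with TV convergence, and the lift to $\bZ_i$.

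\textbf{Markov property.} Conditionally on $\bZ_i$, the vector $\bZ_{i+1}=G_i\bZ_i$ is distributed as $\mcN(0,R(\bu^{(i)})\otimes I_d)$, since the kernel depends only on pairwise distances. The law of $\bu^{(i+1)}$ given $\bZ_i$ is therefore a function of $\bu^{(i)}$ alone. This gives the Markov property, with transition kernel $P(\bu,\cdot)=\Law$ of the squared pairwise distances of a draw from $\mcN(0,R(\bu)\otimes I_d)$. The same observation gives $\Law(\bZ_{i+1}\mid \bu^{(i)})=\mcN(0,R(\bu^{(i)})\otimes I_d)$. Hence $\Law(\bZ_{i})=\int \mcN(0,R(\bu)\otimes I_d)\,d\Law(\bu^{(i-1)})$, and since a mixture map is a TV contraction, \eqref{eq:sync_Z_tv_bound} follows immediately once $\pi^{\bu}$ exists and $\bu^{(i)}\to\pi^{\bu}$ in TV. The state space is preserved: if all $u_{st}>0$ then $R(\bu)$ is strictly positive definite, because the RBF kernel is strictly positive definite on distinct points. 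The Gaussian then has a density, so a.s.\ all new points are distinct.

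\textbf{Existence, uniqueness and convergence of $\pi^{\bu}$.} I would follow the Meyn--Tweedie route already used for \Cref{thm:sync_v_convergence_d}(ii).

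1. \emph{Irreducibility and aperiodicity.} For $\bu$ in the open set $\{R(\bu)\succ0\}$, the distances of $n$ points drawn from a nondegenerate Gaussian on $\RR^{nd}$ have a law with density bounded below on compacts of the realizable region. For $d\ge n-1$ this region is all realizable distance configurations, an open set. For small $d$ the realizable set is lower dimensional, so I would instead work with the chain on configurations modulo isometries, or directly with $\bZ$ itself. The transition density is continuous in $\bu$, so compact sets are small, which gives $\psi$-irreducibility and aperiodicity.

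2. \emph{Drift.} Take $W(\bu)=\sum_{s<t}(\log u_{st})^2$, or to simplify, $\sum_{s<t}V(u_{st})$ with $V$ the scalar Lyapunov function from \Cref{thm:sync_v_convergence_d}(ii), such as $e^{-\theta\log u}+u$. The key point is that the conditional marginal law of $u^{(i+1)}_{st}$ given $\bu^{(i)}$ is exactly the scalar transition \eqref{eq:kernel_corr_converegnce_d} from $u^{(i)}_{st}$. Consequently $\mathbb{E}[W(\bu^{(i+1)})\mid\bu^{(i)}]=\sum_{s<t}PV(u^{(i)}_{st})$, by linearity of expectation. The coupling between pairs is irrelevant for the drift.
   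Given $PV\le(1-\beta)V+b\mathbf 1_{C}$ for the scalar chain with $C$ a compact interval, summing yields $PW\le(1-\beta)W+\binom n2 b$. This is a geometric drift toward the sublevel sets of $W$, and those sublevel sets are compact in $(0,\infty)^{\binom n2}$.

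3. \emph{Conclusion.} Positive Harris recurrence gives a unique $\pi^{\bu}$ and TV convergence from every starting point in the state space.

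4. \emph{Marginals.} The pairwise marginals of $\pi^{\bu}$ are stationary for the scalar chain, so by the uniqueness in \Cref{thm:sync_v_convergence_d}(ii) they equal $\pi^{\bv}$. Stationarity of $\pi_{\bZ}$ for $\bZ$ follows from the mixture representation; uniqueness follows because any stationary law of $\bZ$ with distinct points induces a stationary law of $\bu$, which must be $\pi^{\bu}$, and is then recovered as the mixture.

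\textbf{Main obstacle.} The hardest step is 1: establishing small sets and irreducibility for $\bu$ when $d<n-1$, where distance vectors live on a lower-dimensional semialgebraic set. I would first try replacing Lebesgue measure by the pushforward of Lebesgue measure on $(\RR^d)^n$ under the distance map as the irreducibility measure. Minorization on compacts then comes from the Gaussian density on $\RR^{nd}$ being bounded below uniformly for $\bu$ in compact sets. A second concern is that the scalar drift inequality must hold with a $W$ vanishing nowhere issue at $u\to0$, which holds since $\rho_d>0$.
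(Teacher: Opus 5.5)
Your proposal is correct. Its skeleton matches the paper's: the Markov property through $R(\bu)$, a sum-over-pairs Lyapunov function whose conditional expectation splits into scalar marginal drifts, and the mixture/TV-contraction transfer to $\bZ_i$. It differs in the two technical steps.

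\emph{Drift.} The paper uses $\sum_{s<t}(\log u_{st})^2$, which only has an additive drift. The per-pair term $\Delta(L)$ is bounded above by some $B$ and tends to $-\infty$ at both ends (only linearly on the left, via $\rho_d>0$). The paper then chooses $M$ so that a single pair with $|\log u_{st}|\ge M$ outweighs the $+B$ contributed by every other pair. Your $V(u)=u^{-\theta}+u$ sums without this bookkeeping, because its negative part scales with $W$. However, you must verify its geometric inequality directly. The scalar Lyapunov function from the proof of the scalar theorem is a tent in $\log u$ with bounded drift $-c$, which does not sum across pairs, and $(\log u)^2$ is not geometric either. The check is short:
\begin{itemize}
\item $\theta\mapsto\log\bigl(r^{2\theta}\,\mathbb{E}[X^{-\theta}]\bigr)$ is finite for $\theta<d/2$ and has derivative $-\rho_d<0$ at $0$;
\item $F(u)\ge(u/r^2)(1-u/(4r^2))$ and $\mathbb{E}[F(u)X]\le 2d$;
\item together these give $PV\le\kappa V+b$ globally, with $\kappa<1$.
\end{itemize}
This route buys more than the paper gets. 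It yields $W$-geometric ergodicity, i.e.\ an exponential rate with an initial-state-dependent prefactor, which the paper leaves open.

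\emph{Irreducibility.} Your worry here is justified. The paper's step I.4 asserts a strictly positive Lebesgue density on $(0,\infty)^{\binom{n}{2}}$ and $R(\bu)\succ 0$ for every such $\bu$. For $n\ge 3$ both claims fail off the closed set $S$ of realizable squared-distance vectors, and for $d<n-1$ there is no Lebesgue density at all. Your fix works. On $S$, take the pushforward of $\mcN(0,I_{nd})$ as the irreducibility measure. For compact $K\subset S$, continuity gives $\lambda_{\min}(R(\bu))\ge\lambda_K>0$, and Hadamard's inequality gives $\det R(\bu)\le 1$. Together these bound the transition density on $(\RR^d)^n$ below by $(2\pi)^{-nd/2}e^{-\norm{z}^2/(2\lambda_K)}$, uniformly in $\bu\in K$, which is the one-step minorisation you need.

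\emph{Uniqueness.} Your restriction of the uniqueness of $\pi_{\bZ}$ to laws on distinct-point configurations is necessary, not cosmetic. The law placing all $n$ coordinates at one common $\mcN(0,I_d)$ vector is also stationary for the position chain.
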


\begin{figure}[t]
  \centering
  \includegraphics[width=\textwidth]{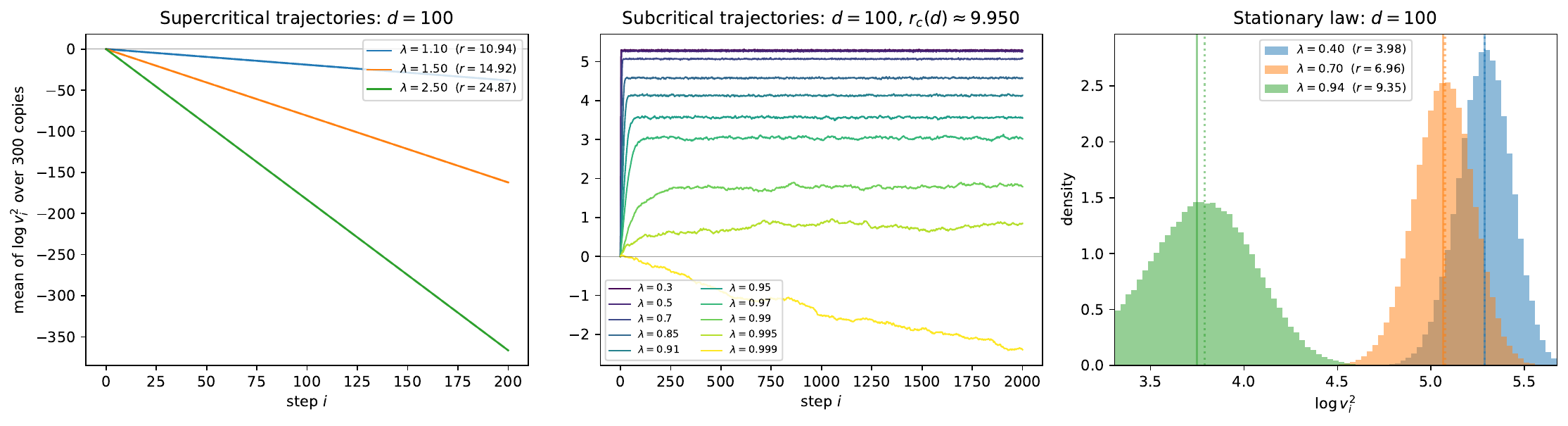}
  \caption{The dichotomy of \Cref{thm:sync_v_convergence_d} at $d = 100$. \emph{Left (supercritical):} trajectories of mean $\log v_i^2$ over $300$ i.i.d.\ copies starting at $v_1 = 1$, for $\lambda \in \Set{1.1, 1.5, 2.5}$; the grey dashed reference line has slope $-2\log\lambda$ predicted by \Cref{thm:sync_v_convergence_d}~(i) and is hidden behind the simulated curve in every case. \emph{Middle (subcritical):} the same kind of trajectories for ten $\lambda$ values spanning $0.30$ to $0.999$ (purple to yellow). All but the near-critical $\lambda = 0.999$ have equilibrated within the first few hundred steps; $\lambda = 0.999$ continues to drift, consistent with the slowdown of \Cref{thm:sync_v_convergence_d}~(ii) as $\lambda \uparrow 1$. \emph{Right:} stationary law of $\log v_i^2$ at $\lambda \in \Set{0.40, 0.70, 0.94}$, from a chain of $2 \cdot 10^5$ samples after $2 \cdot 10^4$-step burn-in. Solid verticals: empirical stationary mean. Dotted verticals: Jensen upper bound $L_*(r, d)$ from \Cref{cor:sync_log_stationary}~(b). The bound is nearly tight at every $\lambda$. The $d = 1, 10$ companions are in \Cref{sec:appendix_more_figs}.}
  \label{fig:exp_d100_main}
\end{figure}

\begin{theorem}[$\pi_{\bZ}$ is marginally Gaussian but not jointly Gaussian]
  \label{prop:sync_not_gaussian}
  Assume $r < r_c(d)$, and let $(\bZ_{\infty,1}, \ldots, \bZ_{\infty,n}) \sim \pi_{\bZ}$ be a sample from the stationary law of \Cref{thm:sync_Z_convergence}. Each coordinate is marginally standard Gaussian,
  \begin{eqnarray*}
    \bZ_{\infty,t} \;\sim\; \mcN(0, I_d) \qquad \text{for every } t \in \Set{1, \ldots, n},
  \end{eqnarray*}
  but for every pair $s \ne t$ the joint distribution of $(\bZ_{\infty,s}, \bZ_{\infty,t})$ is \emph{not} Gaussian. Consequently $\pi_{\bZ}$ itself is not jointly Gaussian: if it were, every pair would be too, contradicting the previous claim.
\end{theorem}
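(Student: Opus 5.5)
The proof splits into two parts: the Gaussian marginals come directly from the mixture representation \eqref{eq:sync_Z_limit_measure}, and non-Gaussianity of the pair comes from comparing the law of $V = \bZ_{\infty,s} - \bZ_{\infty,t}$ with what a Gaussian pair would force.

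\textbf{Marginals.} By \Cref{thm:sync_Z_convergence}, $\pi_{\bZ}$ is a mixture over $\bu \sim \pi^{\bu}$ of $\mcN(0, R(\bu)\otimes I_d)$. Since $R(\bu)_{tt} = \phi(0) = 1$ for every $\bu$, the $t$-th block of each mixture component is $\mcN(0, I_d)$. This does not depend on $\bu$, so the mixture marginal is $\mcN(0, I_d)$.

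\textbf{Reduction for the pair.} Suppose, for contradiction, that $(\bZ_{\infty,s}, \bZ_{\infty,t})$ were jointly Gaussian. Then the difference $V$ would be Gaussian. Under the mixture, conditionally on $\bu$, $V \sim \mcN\bigl(0, 2(1-\phi(\sqrt{u_{st}}))I_d\bigr)$, and $u_{st} \sim \pi^{\bv}$ by \Cref{thm:sync_Z_convergence}. So $V$ is a rotation-invariant scale mixture $V \stackrel{d}{=} \sqrt{2(1-\phi(\sqrt{W}))}\, g$ with $W \sim \pi^{\bv}$ and $g \sim \mcN(0, I_d)$ independent. Every centred Gaussian has finite moments, and a centred Gaussian scale mixture is itself Gaussian only if its scale is a.s.\ constant. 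This follows by comparing fourth moments of a one-dimensional projection: $\Exp{(S g_1)^4} = 3\Exp{S^4}$ while $3(\Exp{S^2})^2$ is what a Gaussian with the same variance gives, and Jensen is strict unless $S^2$ is degenerate. Equivalently, one can use characteristic functions, since a mixture of Gaussians is identifiable. Hence it suffices to show that $Y := 1 - \phi(\sqrt{W})$ is not a.s.\ constant under $\pi^{\bv}$. Since $\phi$ is strictly monotone, this is the same as showing that $\pi^{\bv}$ is not a point mass.

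\textbf{Non-degeneracy of $\pi^{\bv}$.} By stationarity in \eqref{eq:kernel_corr_converegnce_d}, $W \stackrel{d}{=} 2(1-\phi(\sqrt{W}))X$ with $X \sim \chi^2_d$ independent of $W$. If $W$ were a.s.\ equal to a constant $w$, the right-hand side would be $2(1-\phi(\sqrt{w}))X$. This is a.s.\ constant only if $1-\phi(\sqrt{w}) = 0$, i.e.\ $w = 0$, since $X$ has a continuous law. But $\pi^{\bv}$ lives on $(0,\infty)$ by \Cref{thm:sync_v_convergence_d}~(ii), a contradiction. In fact $W$ has a density, being $\chi^2_d$ times an independent positive factor, which gives non-degeneracy immediately.

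\textbf{Conclusion.} The scale $S^2 = 2Y$ is non-degenerate, so $V$ is not Gaussian, and therefore the pair is not jointly Gaussian. The final claim follows because Gaussian vectors have Gaussian sub-vectors.

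\textbf{Main obstacle.} The step needing care is the scale-mixture lemma, in particular ensuring that the moments used are finite. This holds here since $Y \le 1$, so $\Exp{S^4} \le 4$. The strict Jensen step $\Exp{Y^2} > (\Exp{Y})^2$ then holds because $Y$ is non-degenerate.
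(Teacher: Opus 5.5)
Your proof is correct, but it takes a different route from the paper.

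\emph{The paper's route.} The paper does not use the mixture form \eqref{eq:sync_Z_limit_measure} at this point.
\begin{itemize}
\item It obtains isotropy of $V_\infty$ by passing finite-step isotropy to the weak limit.
\item It concludes that a Gaussian pair would force $\norm{V_\infty}^2 \stackrel{d}{=} \sigma_v^2\chi^2_d$.
\item It then shows that no scaled $\chi^2_d$ law is invariant for \eqref{eq:kernel_corr_converegnce_d}. It takes logarithms and characteristic functions, using $\Exp{X^{it}} = 2^{it}\Gamma(d/2+it)/\Gamma(d/2)$, which never vanishes because $\Gamma$ has no zeros. This lets the innovation factor cancel and forces the one-step multiplier $A(X)$ to be a.s.\ constant, contradicting its strict monotonicity and the full support of $\chi^2_d$.
\end{itemize}

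\emph{Your route.} You read off from the mixture that $V$ is an isotropic Gaussian scale mixture with scale $S^2 = 2(1-\phi(\sqrt{W}))$, $W\sim\pi^{\bv}$. You replace the characteristic-function cancellation by a fourth-moment (kurtosis) comparison. The only remaining input is the one-line observation that the stationarity equation excludes a point mass for $\pi^{\bv}$ on $(0,\infty)$.

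\emph{Comparison.} Both arguments rest on the same principle: a scale mixture with a non-degenerate mixing variable cannot equal a single member of the family.
\begin{itemize}
\item Your version is more elementary, needing bounded fourth moments rather than facts about the zeros of $\Gamma$.
\item It needs no weak-limit step, either for isotropy or for the Gaussian marginals.
\item It gives a quantitative witness of non-Gaussianity: each coordinate of $V$ has excess kurtosis $3\Var(S^2)/(\Exp{S^2})^2>0$.
\item The paper's version uses $\pi_{\bZ}$ only through isotropy and the pair marginal $\pi^{\bv}$. It proves directly a statement about the scalar chain alone, namely that $\pi^{\bv}$ is not a scaled $\chi^2_d$.
\item Your argument recovers that scalar statement only after the extra observation that an isotropic vector's law is determined by the law of its norm.
\end{itemize}
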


The pairwise non-Gaussianity holds because, as we show, no distribution of $v_i$ arising from a Gaussian difference of two coordinates can be stationary for \eqref{eq:kernel_corr_converegnce_d}.

\paragraph{Argument for the proof of \Cref{prop:sync_not_gaussian}.}
\emph{(1, marginal.)} As discussed above, the marginal $\bZ_{i,t}$ is Gaussian, $\bZ_{i,t} \sim \mcN(0, I_d)$, at every finite step, and the TV convergence $\Law(\bZ_i) \to \pi_{\bZ}$ from \Cref{thm:sync_Z_convergence} forces every coordinate marginal under $\pi_{\bZ}$ to be standard Gaussian.
\emph{(2, isotropy.)} Fix any $s \ne t$. The one-step dynamics make $V_i := \bZ_{i,s} - \bZ_{i,t}$ rotationally invariant in $\RR^d$ at every finite $i$ (its conditional covariance given $\bZ_{i-1}$ is a scalar multiple of $I_d$), and rotational invariance passes to the weak limit $V_\infty$. If now $(\bZ_{\infty,s}, \bZ_{\infty,t})$ were Gaussian, $V_\infty$ would be a centred isotropic Gaussian, hence $V_\infty \sim \mcN(0, \sigma_v^2\, I_d)$ for some $\sigma_v^2 \in (0, \infty)$, and $v_\infty^2 := \norm{V_\infty}^2$ would be a scaled $\chi^2_d$.
\emph{(3, punchline.)} No scaled $\chi^2_d$ is stationary for \eqref{eq:kernel_corr_converegnce_d} --- proven via characteristic functions in \Cref{sec:proofs} --- contradicting the stationarity of $\pi^{\bv}$ in \Cref{thm:sync_v_convergence_d}~(ii). The pair $(s, t)$ was arbitrary, so this rules out joint Gaussianity for every pair under $\pi_{\bZ}$, and hence for $\pi_{\bZ}$ itself.

\begin{remark}
  \label{rem:sync_d_scaling}
  The map $d \mapsto r_c(d) = \sqrt 2\, e^{\psi(d/2)/2}$ is increasing, and asymptotically $\psi(d/2) = \log(d/2) - 1/d + O(1/d^2)$, so $r_c(d) \sim \sqrt d$ as $d \to \infty$.
\end{remark}

\begin{proposition}[Moments of $\log v^2$ in the subcritical regime]
  \label{cor:sync_log_stationary}
  Assume $r < r_c(d)$. Under $\pi^{\bv}$ (the stationary law of $\Set{v_i^2}$ on $(0, \infty)$ from \Cref{thm:sync_v_convergence_d}~(ii)), write $L := \log v^2$; then $\Expsubidx{\pi^{\bv}}{|L|} < \infty$, and
  \begin{itemize}
    \item[(a)] \emph{Stationarity identity.} $\Expsubidx{\pi^{\bv}}{H(L)} \;=\; -\psi(d/2) - \log 2$, where $H(L) := \log\Brack{F(e^L)/e^L}$ and $F(u) = 2(1 - e^{-u/(2r^2)})$.
    \item[(b)] \emph{Jensen upper bound.} $\Expsubidx{\pi^{\bv}}{\log v^2} \;\le\; L_*(r, d)$, where $L_*$ is the unique root of $H(L_*) = -\psi(d/2) - \log 2$; explicitly
    \begin{equation}
      \label{eq:sync_L_star}
      L_*(r, d) \;=\; \log\Brack{2 r^2 \alpha},
    \end{equation}
    with $\alpha \in (0,\infty)$ the unique solution of
    \begin{equation*}
      \frac{1 - e^{-\alpha}}{\alpha} \;=\; \Brack{r / r_c(d)}^2.
    \end{equation*}
    $L_*(r, d) \to -\infty$ as $r \uparrow r_c(d)$; in particular $\Expsubidx{\pi^{\bv}}{\log v^2} \to -\infty$ at criticality.
  \end{itemize}
\end{proposition}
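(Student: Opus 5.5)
The plan is to work on the log-chain $L_i = \log v_i^2$. Writing $F(u) = 2(1-e^{-u/(2r^2)})$, the recursion \eqref{eq:kernel_corr_converegnce_d} becomes
\begin{equation*}
L_{i+1} = \log F(e^{L_i}) + \log X_i = L_i + H(L_i) + \log X_i ,
\end{equation*}
with $X_i \sim \chi^2_d$ independent of $L_i$.

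\textbf{Integrability.} We first show $\Expsubidx{\pi^{\bv}}{|L|}<\infty$, treating the two tails separately.
\begin{itemize}
\item[] \emph{Upper tail.} Since $F\le 2$, under stationarity $L \overset{d}{=} \log F(e^{L'}) + \log X \le \log 2 + \log X$. The term $\log X$ is integrable because $\chi^2_d$ has an exponential tail, so $\Expsubidx{\pi^{\bv}}{L^+}<\infty$.
\item[] \emph{Lower tail.} This is the delicate part. I would use the Foster--Lyapunov function from the proof of \Cref{thm:sync_v_convergence_d}(ii), of the form $W(L)=e^{-\beta L}$ for small $\beta>0$ on the left. For $\beta$ small, $\Exp{X^{-\beta}}\, r^{2\beta}<1$, because the derivative in $\beta$ at $0$ is $-\rho_d<0$. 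The drift inequality $PW\le (1-c)W + b\mathbf 1_C$ then gives $\pi^{\bv}(W)<\infty$ by the standard Meyn--Tweedie comparison theorem. This yields exponential, hence integrable, control of $L^-$.
\end{itemize}
Alternatively, for the lower tail one can note $H(L)\ge -\log r^2 - o(1)$ as $L\to-\infty$, so $L'\ge L - 2\log r+\log X - C$ on the left; but the exponential Lyapunov route is cleaner.

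\textbf{(a) Stationarity identity.} $H$ is bounded above by $-\log r^2$, since $F(u)\le u/r^2$. It is also continuous and behaves like $\log 2 - L$ as $L\to+\infty$, so $|H(L)|\le C(1+|L|)$ and $H(L)$ is $\pi^{\bv}$-integrable. Taking expectations of $L_{i+1}=L_i+H(L_i)+\log X_i$ under stationarity, both $L_{i+1}$ and $L_i$ have the same integrable law, so they cancel. This leaves $\Exp H(L) + \psi(d/2)+\log 2 = 0$, which is (a).

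\textbf{(b) Jensen bound.} The plan is to show $H$ is concave and strictly decreasing. Substituting $u=e^L$, we have $H(L) = \log\bigl(2(1-e^{-u/(2r^2)})\bigr) - L$. Setting $g(L)=\log(1-e^{-ce^L})$ and $s=ce^L$, one computes
\begin{equation*}
g'(L)=\frac{s}{e^{s}-1},
\end{equation*}
which is decreasing in $s$, hence in $L$. So $g$ is concave and $H$ is concave.

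Monotonicity follows since $H'(L)=s/(e^s-1)-1<0$. The limits are $H\to -\log r^2$ as $L\to-\infty$ and $H\to-\infty$ as $L\to+\infty$. Because $r<r_c(d)$ means $-\log r^2 > -\psi(d/2)-\log 2$, the root $L_*$ of $H(L_*) = -\psi(d/2)-\log 2$ exists and is unique.

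Jensen then gives $H(\Exp L)\ge \Exp H(L) = H(L_*)$, and since $H$ is decreasing this yields $\Exp L\le L_*$.

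For the explicit form, write $e^{L_*}=2r^2\alpha$. The root equation becomes
\begin{equation*}
\log\frac{1-e^{-\alpha}}{r^2\alpha} = -\psi(d/2)-\log 2,
\end{equation*}
that is, $(1-e^{-\alpha})/\alpha = r^2 e^{-\psi(d/2)}/2 = (r/r_c)^2$. Since $(1-e^{-\alpha})/\alpha$ decreases from $1$ to $0$, the solution $\alpha$ exists and is unique. As $r\uparrow r_c$ the right side tends to $1$, forcing $\alpha\to0$ and hence $L_*\to-\infty$.

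The main obstacle is the lower-tail integrability. It must be extracted from the Lyapunov construction of \Cref{thm:sync_v_convergence_d}(ii), and I would verify there that an exponential-type $W$ on the left satisfies a genuine geometric drift.
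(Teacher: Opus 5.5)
Your arguments for (a) and (b) match the paper's. For (a) you take expectations of $L_{i+1}=L_i+H(L_i)+\log X_i$ under stationarity, using the linear envelope $|H(L)|\le C(1+|L|)$. For (b) you apply Jensen with $H$ concave and strictly decreasing, then substitute $e^{L_*}=2r^2\alpha$. You additionally check existence and uniqueness of the root from the limits of $H$, which the paper leaves implicit.

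Where you genuinely differ is the integrability of $L$ under $\pi^{\bv}$. The paper asserts that the tent function $V(L)=(L_0-L)^+ + (L-L^0)^+$ from the proof of \Cref{thm:sync_v_convergence_d}~(ii) has $\mathbb{E}_{\pi^{\bv}}V<\infty$, citing Theorem~14.3.7 of Meyn--Tweedie. That theorem only gives $\pi(f)\le\pi(s)$ for the decrement $f$ in $PV\le V-f+s$. On the right tail this suffices, since $PV\le\kappa$ there and the decrement grows like $V$. On the left tail, however, the tent has only the constant decrement $(\rho_d-\varepsilon)/2$, which says nothing about $\mathbb{E}_{\pi^{\bv}}L^-$. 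Your route closes exactly this hole: stochastic domination $L\le\log 2+\log X$ handles $L^+$, and a geometric drift for $W=e^{-\beta L}$ handles $L^-$. It even yields $\mathbb{E}_{\pi^{\bv}}v^{-2\beta}<\infty$.

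Be aware that no exponential Lyapunov function appears in the paper's proof of \Cref{thm:sync_v_convergence_d}~(ii), which uses only the tent. You therefore have to build it yourself:
\begin{enumerate}
\item Take $0<\beta<d/2$ with $r^{2\beta}\mathbb{E}X^{-\beta}<1$.
\item Since $F(u)/u$ is decreasing in $u$ with limit $1/r^2$ at $0$, pick $\delta>0$ with $\lambda:=(F(\delta)/\delta)^{-\beta}\mathbb{E}X^{-\beta}<1$.
\item Then $PW(u)=F(u)^{-\beta}\mathbb{E}X^{-\beta}\le\lambda W(u)$ for $u\le\delta$.
\item For $u>\delta$, $PW(u)\le F(\delta)^{-\beta}\mathbb{E}X^{-\beta}=:b$.
\item The same comparison theorem with $f=(1-\lambda)W$ and $s\equiv b$ gives $\mathbb{E}_{\pi^{\bv}}W\le b/(1-\lambda)$.
\end{enumerate}

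Your fallback, a pathwise lower bound on $L_{i+1}-L_i$ near $-\infty$, would not by itself control the stationary left tail, so keep the exponential route. Within the paper's own toolkit, another valid fix is the quadratic function $L^2$ from the proof of \Cref{thm:sync_Z_convergence}. Its drift behaves like $2\rho_d L$ as $L\to-\infty$, so the same comparison theorem gives $\mathbb{E}_{\pi^{\bv}}|L|<\infty$ directly.
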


\begin{corollary}[Dimension factorization of $L_*$ at matched subcriticality]
  \label{cor:sync_subcrit_d_scaling}
  Fix $\lambda \in (0, 1)$ and set $r = \lambda\, r_c(d)$. The Jensen bound of \Cref{cor:sync_log_stationary}~(b) factorizes as
  \begin{eqnarray*}
    L_*(\lambda\,r_c(d), d) \;=\; \psi(d/2) + 2\log 2 + \log\Brack{\lambda^2\,\alpha(\lambda)},
  \end{eqnarray*}
  where $\alpha(\lambda) \in (0, \infty)$ is the unique solution of $(1 - e^{-\alpha(\lambda)})/\alpha(\lambda) = \lambda^2$ (universal in $d$).
\end{corollary}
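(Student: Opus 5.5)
The plan is to substitute $r = \lambda\, r_c(d)$ directly into the explicit formula \eqref{eq:sync_L_star} of \Cref{cor:sync_log_stationary}~(b) and then expand $r_c(d)^2$. By that proposition, $L_*(r,d) = \log(2r^2\alpha)$, where $\alpha$ is the unique positive solution of $(1-e^{-\alpha})/\alpha = (r/r_c(d))^2$.

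With $r = \lambda r_c(d)$, the right-hand side of this defining equation is exactly $\lambda^2$, so it no longer depends on $d$. The solution is therefore the $d$-independent quantity $\alpha(\lambda)$ in the statement.

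Existence and uniqueness of $\alpha(\lambda)$ should be checked once for $\lambda\in(0,1)$. The function $g(\alpha) = (1-e^{-\alpha})/\alpha$ is continuous and strictly decreasing on $(0,\infty)$. Strict decrease follows because $g'(\alpha)$ has the sign of $(1+\alpha)e^{-\alpha} - 1 < 0$. Moreover $g(\alpha) \to 1$ as $\alpha\downarrow 0$ and $g(\alpha) \to 0$ as $\alpha\to\infty$. Hence every $\lambda^2 \in (0,1)$ is attained exactly once.

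It then remains to compute
\[
2r^2 = 2\lambda^2 r_c(d)^2 = 2\lambda^2 \cdot 2 e^{\psi(d/2)} = 4\lambda^2 e^{\psi(d/2)}.
\]
Taking logarithms gives
\[
L_* = \log\bigl(4\lambda^2 e^{\psi(d/2)}\alpha(\lambda)\bigr) = \psi(d/2) + 2\log 2 + \log\bigl(\lambda^2\alpha(\lambda)\bigr),
\]
which is the claimed factorization.

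There is no genuine obstacle here. The only point needing care is confirming that the root defining $\alpha$ in \Cref{cor:sync_log_stationary} is the same as $\alpha(\lambda)$, which is just the uniqueness argument above.
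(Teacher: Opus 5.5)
Your proposal is correct and follows the paper's proof essentially step for step. You substitute $r=\lambda r_c(d)$ so the defining equation for $\alpha$ becomes the $d$-independent equation $(1-e^{-\alpha})/\alpha=\lambda^2$, and then expand $\log\bigl(2\lambda^2 r_c(d)^2\alpha(\lambda)\bigr)$ using $r_c(d)^2=2e^{\psi(d/2)}$. Your explicit uniqueness check for $\alpha(\lambda)$ repeats what the paper already establishes in the proof of \Cref{cor:sync_log_stationary}~(b), but it is correct and harmless.
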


\begin{corollary}[Persistence of coupling: comparison with independent Gaussians in $\RR^d$]
  \label{rem:sync_indep_gaussian_comparison}
  Consider a null model in which a pair of coordinates has the same marginals as our process $\bZ_{i,t}$ but is independent: $\tilde Z_1, \tilde Z_2 \sim \mcN(0, I_d)$ and $\tilde Z_1 \perp \tilde Z_2$. Then $\tilde v^2 = \norm{\tilde Z_1 - \tilde Z_2}^2 \sim 2\chi^2_d$ and
  \begin{eqnarray}
    \label{eq:sync_indep_gaussian_logv}
    \Exp{\log \tilde v^2} \;=\; \psi(d/2) + 2\log 2 \;=\; \log\Brack{2\,r_c(d)^2}.
  \end{eqnarray}
  In the notation of \Cref{cor:sync_subcrit_d_scaling}, at $r = \lambda\,r_c(d)$ with $\lambda \in (0, 1)$, the Jensen bound sits \emph{strictly below} this benchmark:
  \begin{eqnarray}
    \label{eq:sync_indep_gaussian_deficit}
    L_*(\lambda\,r_c(d), d) \;-\; \Exp{\log \tilde v^2} \;=\; \log\Brack{\lambda^2\,\alpha(\lambda)} \;<\; 0.
  \end{eqnarray}
  The deficit $\log[\lambda^2 \alpha(\lambda)]$ is \emph{independent of $d$} and  diverges to $-\infty$ as $\lambda \uparrow 1$ (critical threshold).

  A non-zero deficit witnesses dependence directly: the marginals $\bZ_{\infty,1}, \bZ_{\infty,2} \sim \mcN(0, I_d)$ are fixed by the chain, so under independence one would have $v^2 \sim 2\chi^2_d$ exactly and $\Exp{\log v^2} = \log[2\,r_c(d)^2]$. The strict deficit $\log[\lambda^2\,\alpha(\lambda)] < 0$ therefore rules out independence between $\bZ_{\infty,1}$ and $\bZ_{\infty,2}$. In particular, for any $d$, the dependence can be made arbitrarily strong by taking $\lambda \in (0, 1)$ large enough.
\end{corollary}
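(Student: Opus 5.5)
The plan has three parts: compute the benchmark \eqref{eq:sync_indep_gaussian_logv} directly, obtain the deficit \eqref{eq:sync_indep_gaussian_deficit} from \Cref{cor:sync_subcrit_d_scaling}, and then study the scalar equation for $\alpha(\lambda)$.

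\textbf{Benchmark.} Since $\tilde Z_1 - \tilde Z_2 \sim \mcN(0, 2 I_d)$, we have $\tilde v^2 = 2X$ with $X \sim \chi^2_d$. The logarithmic moment from \Cref{sec:definitions} then gives
\begin{eqnarray*}
  \Exp{\log \tilde v^2} \;=\; \log 2 + \psi(d/2) + \log 2 .
\end{eqnarray*}
Squaring the definition of $r_c(d)$ gives $r_c(d)^2 = 2 e^{\psi(d/2)}$. Hence $\log(2 r_c(d)^2) = 2\log 2 + \psi(d/2)$, which is \eqref{eq:sync_indep_gaussian_logv}.

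\textbf{Deficit.} Subtracting \eqref{eq:sync_indep_gaussian_logv} from the factorization in \Cref{cor:sync_subcrit_d_scaling} cancels the $\psi(d/2) + 2\log 2$ terms exactly. What remains is $\log[\lambda^2\alpha(\lambda)]$. This is independent of $d$ because the equation $(1-e^{-\alpha})/\alpha = \lambda^2$ does not involve $d$.

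\textbf{Sign and divergence.} Rewrite the defining equation as $\lambda^2 \alpha = 1 - e^{-\alpha}$, so that $\lambda^2 \alpha(\lambda) = 1 - e^{-\alpha(\lambda)} \in (0,1)$ because $\alpha > 0$. Taking logarithms gives strict negativity. For the divergence, the function $g(\alpha) = (1-e^{-\alpha})/\alpha$ decreases strictly from $1$ (as $\alpha \downarrow 0$) to $0$. Therefore $\alpha(\lambda) = g^{-1}(\lambda^2)$ is well defined, continuous, and tends to $0$ as $\lambda \uparrow 1$. It follows that $\lambda^2\alpha(\lambda) = 1 - e^{-\alpha(\lambda)} \to 0$, so the deficit tends to $-\infty$.

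\textbf{Dependence.} By \Cref{prop:sync_not_gaussian} the marginals of $\bZ_{\infty,1}$ and $\bZ_{\infty,2}$ are $\mcN(0,I_d)$. If the two were independent, then $v^2 := \norm{\bZ_{\infty,1}-\bZ_{\infty,2}}^2$ would have law $2\chi^2_d$ and $\Expsubidx{\pi^{\bv}}{\log v^2} = \log(2 r_c(d)^2)$. Here the pairwise marginal of $\pi^{\bu}$ equals $\pi^{\bv}$ by \Cref{thm:sync_Z_convergence}, and $\Expsubidx{\pi^{\bv}}{|L|}<\infty$ by \Cref{cor:sync_log_stationary}. The Jensen bound \Cref{cor:sync_log_stationary}~(b), combined with the strict deficit, gives
\begin{eqnarray*}
  \Expsubidx{\pi^{\bv}}{\log v^2} \;\le\; L_* \;<\; \log(2 r_c(d)^2),
\end{eqnarray*}
which is a contradiction. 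The same bound shows $\Expsubidx{\pi^{\bv}}{\log v^2} \to -\infty$ as $\lambda \uparrow 1$. The two points are therefore typically exponentially closer than under independence, which quantifies how strong the dependence becomes.

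\textbf{Main obstacle.} The argument is essentially bookkeeping. The only point needing care is that $\alpha(\lambda)$ is unique and tends to $0$, which rests on the strict monotonicity of $g$. This follows because $e^{\alpha} > 1 + \alpha$ implies $g'(\alpha) < 0$.
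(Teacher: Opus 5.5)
Your proposal is correct and follows the same route as the paper. The benchmark comes from the $\chi^2_d$ log-moment, the deficit from cancelling $\psi(d/2)+2\log 2$ against the factorization of \Cref{cor:sync_subcrit_d_scaling}, strict negativity from $\lambda^2\alpha(\lambda)=1-e^{-\alpha(\lambda)}<1$, divergence from $\alpha(\lambda)\to 0$, and independence is ruled out by combining the fixed $\mcN(0,I_d)$ marginals with the Jensen bound of \Cref{cor:sync_log_stationary}~(b) (your explicit checks that $g'(\alpha)<0$ and that $\Expsubidx{\pi^{\bv}}{|L|}<\infty$ just make explicit what the paper leaves implicit).
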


We observe that, as expected, by taking $\lambda \uparrow 1$ in \eqref{eq:sync_indep_gaussian_deficit} we approach the supercritical full-dependence regime, where $\bZ_{\infty,1} = \bZ_{\infty,2}$ and $\Exp{\log v^2} = -\infty$.

The strict negativity in \eqref{eq:sync_indep_gaussian_deficit} is a consequence of the defining equation for $\alpha(\lambda)$: multiplying $(1 - e^{-\alpha(\lambda)})/\alpha(\lambda) = \lambda^2$ by $\alpha(\lambda)$ gives $\lambda^2\, \alpha(\lambda) = 1 - e^{-\alpha(\lambda)} < 1$.

\section{Experiments}
\label{sec:experiments}

In this section we verify the threshold $r_c(d)$ empirically across $d \in \Set{1, 10, 100}$, and then use the bounds of \Cref{sec:results} to choose $\lambda$ per dimension and explore the unstructured (Gaussian-baseline) and structured (chain-specific multimodal) regimes of $\pi_{\bZ}$. Code to reproduce the experiments is included in the supplementary material.

For the threshold check we simulate the scalar pairwise-distance chain \eqref{eq:kernel_corr_converegnce_d} in log-space ($L_i = \log v_i^2$) to avoid spurious absorption at $0$ when $v_i$ becomes moderately small. Throughout we parametrize $r = \lambda\, r_c(d)$, so the asymptotic rate of decay of $L_i$, $-\rho_d = 2\log r - \psi(d/2) - \log 2$ (see \Cref{thm:sync_v_convergence_d}~(i)), equals $-2\log\lambda$ \emph{independently of $d$}.

\Cref{fig:exp_d100_main} confirms the dichotomy of \Cref{thm:sync_v_convergence_d} at $d = 100$; the analogous figures at $d = 1, 10$ are provided in supplementary \Cref{sec:appendix_more_figs}, \Cref{fig:appendix_trajectories_above,fig:appendix_trajectories_below,fig:appendix_stationary_multidim}, and exhibit a generally similar behaviour. Above $r_c(d)$ (left pane of \Cref{fig:exp_d100_main}), the mean of $\log v_i^2$ decays linearly at the predicted rate $-2\log\lambda$, with the dashed reference slope hidden behind the simulated curve. Below $r_c(d)$ (middle pane of \Cref{fig:exp_d100_main}), the mean of $\log v_i^2$ stabilises at a fixed value at every $\lambda$ we tested except the most near-critical, consistent with convergence; the stabilisation slows as $\lambda \uparrow 1$, and at $\lambda = 0.999$ the trajectory is still drifting at depth $2000$. The stationary law (right pane of \Cref{fig:exp_d100_main}) acquires a heavier left tail as $\lambda \uparrow 1$, with the Jensen bound $L_*$ nearly tight at $d = 100$. At low $d$ the bound becomes loose: at $d = 1, \lambda = 0.94$ the empirical mean sits markedly below $L_*$ (Supplementary \Cref{fig:appendix_stationary_multidim}).

\paragraph{From bounds to a choice of $\lambda$.}
The theory of \Cref{sec:results} gives more than a sharp threshold: it tells us how to choose $\lambda$ per dimension to see informative structure under $\pi_{\bZ}$. The relevant tool is the i.i.d.-Gaussian comparison of \Cref{rem:sync_indep_gaussian_comparison}: at $r = \lambda\, r_c(d)$, the stationary mean of $\log v^2$ sits below its i.i.d.-Gaussian benchmark by exactly $\log[\lambda^2\,\alpha(\lambda)] < 0$ (see eq.~\eqref{eq:sync_indep_gaussian_deficit}), which is $d$-independent at fixed $\lambda$. The benchmark itself, $\log[2\, r_c(d)^2] \sim \log d$, grows with dimension, so for $\bZ$ to look distinguishable from the independent-Gaussian null we would like the deficit to be of similar order. This is what fixes the appropriate $\lambda$ at each dimension: a moderate $\lambda$ suffices at small $d$, but at large $d$, $\alpha(\lambda)$ has to shrink to make the deficit $\log d$-large, forcing $\lambda$ close to $1$. Concretely, the defining equation $(1 - e^{-\alpha(\lambda)})/\alpha(\lambda) = \lambda^2$ gives $\alpha(\lambda) \approx 2(1 - \lambda^2)$ as $\lambda \uparrow 1$, and hence
\begin{equation}
  \label{eq:deficit_asymptotic}
  \log[\lambda^2\,\alpha(\lambda)] \;\approx\; \log[2(1 - \lambda^2)].
\end{equation}
The deficit therefore reaches order $-\log d$ at $1 - \lambda^2 \sim 1/d$, i.e.\ $\lambda \approx 1 - 1/(2d)$ at large $d$. This estimate agrees with the experiments below: the visible-structure window we observe at $d = 100$ sits at $\lambda \in (0.99, 1)$ and at $d = 10$ at $\lambda \in (0.91, 0.97)$, both consistent with $1 - \lambda \asymp 1/d$.

\paragraph{Sampling from $\pi_{\bZ}$.}
We simulate the position chain $\bZ_i \in (\RR^d)^n$ for $n = 1000$ and visualise the resulting cloud one chain at a time, with $\lambda$ values chosen per dimension to span the regime transition from a near-independent Gaussian baseline to chain-specific multimodal structure: $d = 1$ at $\lambda \in \Set{0.10, 0.30, 0.60, 0.85}$, $d = 10$ at $\lambda \in \Set{0.66, 0.91, 0.95, 0.97}$, $d = 100$ at $\lambda \in \Set{0.90, 0.99, 0.995}$. Five i.i.d.\ chains per $\lambda$. \Cref{fig:exp_pi_Z_tsne_main} shows the $d = 100$ case as a per-chain t-SNE embedding at depth $i = 1000$. Note that according to \Cref{fig:exp_d100_main}, for the $\lambda$ in the above range, at that depth the chains have already stabilised. At the smallest $\lambda$ ($\lambda = 0.90$, top row) the cloud is an approximately isotropic blob and the chains are visually indistinguishable from one another; as $\lambda$ approaches $1$ the embeddings fragment into chain-specific multimodal structure --- clusters, strings, loops --- in the narrow strip $\lambda \in (0.99, 1)$, with the structure becoming more pronounced for higher $\lambda$. The structure and the run-to-run variation of the chains visualise the non-Gaussianity of $\pi_{\bZ}$ established in \Cref{prop:sync_not_gaussian}: a Gaussian limit would force every chain to look like a single common ellipsoidal cloud. Note that the structure visible at large $\lambda$ is not an artefact of the embedding: all the figures use the same (default) t-SNE parameters, and at small $\lambda$ the same t-SNE pipeline returns an essentially featureless blob. Moreover, the corresponding linear PCA view in \Cref{sec:appendix_more_figs} (\Cref{fig:appendix_pi_Z_pca_d100}) shows similar chain-specific bananas, U-shapes, and arcs without any non-linear embedding step. The analogous figures at $d = 1$ (per-chain histograms) and $d = 10$ (PCA and t-SNE, where the visible-structure window is wider and the transition smoother) are reported in \Cref{sec:appendix_more_figs}; as with the $d = 100$ case, at $d = 1, 10$ the chain-specific structure becomes more pronounced as $\lambda$ enters its $d$-dependent near-critical strip.

\begin{figure}[t]
  \centering
  \includegraphics[width=\textwidth]{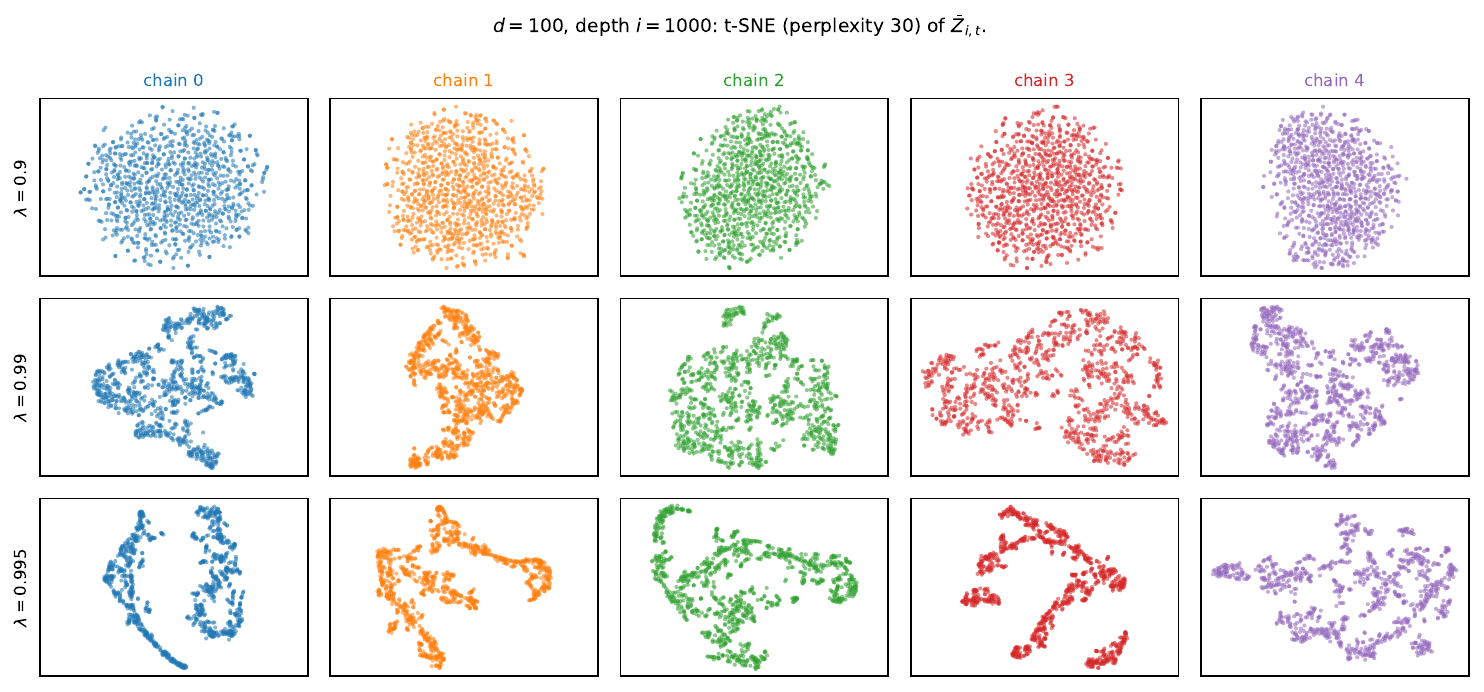}
  \caption{Per-chain t-SNE embeddings of $\bZ_{i,t} \in \RR^{100}$ at depth $i = 1000$, for three subcritical $\lambda$ (rows) and five i.i.d.\ chains (columns); $n = 1000$ points per panel, perplexity $30$, PCA initialisation. The visible-structure window in $d = 100$ is narrow: $\lambda = 0.90$ (top row) embeds as a featureless isotropic blob in every chain, indistinguishable across chains; $\lambda = 0.99$ and $\lambda = 0.995$ (lower rows) produce chain-specific multimodal structure --- the non-Gaussianity of $\pi_{\bZ}$ from \Cref{prop:sync_not_gaussian} made visible. The same chain-specific structure is recovered by linear PCA (\Cref{fig:appendix_pi_Z_pca_d100} in \Cref{sec:appendix_more_figs}), so the patterns here are not an artefact of the t-SNE embedding.}
  \label{fig:exp_pi_Z_tsne_main}
\end{figure}

\section{Conclusions and Future Work}
\label{sec:conclusions}

We have established the existence and several basic properties of the depth-infinite limit $\bZ_{\infty}$ of the compositional-GP prior chain $\bZ_i$. For the RBF kernel we identified a sharp threshold $r_c(d) = \sqrt{2}\, e^{\psi(d/2)/2}$ separating degeneration into a constant function from survival of a nontrivial limit; below the threshold we showed that $\bZ_{\infty}$ exists, is not Gaussian, and its components exhibit nontrivial dependence, so that the prior retains genuine structure at depth.

Several directions remain open. First, while we have established the existence and uniqueness of the limit $\pi_{\bZ}$, the \emph{rate} of convergence $\Law(\bZ_i) \to \pi_{\bZ}$ remains open; establishing quantitative, and ideally uniform, rates --- with prefactors independent of the initial state $\bZ_1$ --- would be a natural next step and is a prerequisite for any finite-depth application. The question connects to a substantial literature on quantitative ergodicity for iterated random functions \citep{diaconis1999iterated,wushao2004limit,stenflo2012survey}, though the infinite-dimensional GP setting we consider falls outside the parametric families that line of work has typically addressed.

Second, the structure of $\pi_{\bZ}$ itself is largely unexplored beyond the basic properties we establish here, and analysing its dependence structure in more detail is a natural next step. A concrete and likely tractable question is whether $\pi_{\bZ}$ belongs to any known family of probability distributions, or constitutes a genuinely new family --- the empirical multimodality observed in \Cref{sec:experiments} suggests it is not a simple modification of a familiar parametric class.

Third, 
the arguments we use here appear not to apply at the critical bandwidth value $r = r_c(d)$ itself, and it will be interesting to understand the behaviour there as well. Our results are also specific to the RBF kernel; whether the same dichotomy --- a sharp threshold separating synchronisation from a non-degenerate limit --- holds for other kernel families (e.g.\ Mat\'ern) is an obvious open problem. The arguments here use the RBF kernel structure both for the global one-step bound $1 - \phi(v) \le v^2/(2r^2)$ underlying the supercritical SLLN and for the boundary linearisation used in the Foster--Lyapunov drift; replacing or generalising these with kernel-agnostic arguments would extend the picture beyond the Gaussian-form case.

\bibliography{refs}
\bibliographystyle{plainnat}

\clearpage
\appendix

\section{Proofs}
\label{sec:proofs}

The dichotomy of \Cref{thm:sync_v_convergence_d} is proved in two stages. We first treat the scalar case $d = 1$ (\Cref{prop:sync_v_convergence} below), where the innovation is $g_i^2$ with $g_i \sim \mcN(0,1)$; all of the analytic work is here. The $\RR^d$ case is then recovered by substituting $X_i \sim \chi^2_d$ for $g_i^2$ throughout: the functional form of the log-chain, the Foster--Lyapunov function, and the structural ingredients (irreducibility, aperiodicity, small sets) are unchanged, and only the noise mean $\Exp{\log g^2} = -\gamma - \log 2$ is replaced by $\Exp{\log X} = \psi(d/2) + \log 2$.

For $d = 1$, the recursion \eqref{eq:kernel_corr_converegnce_d} reduces to
\begin{eqnarray}
  \label{eq:kernel_corr_converegnce}
  v_{i+1} \;=\; \sqrt{2(1 - \phi(v_i))}\, g_i, \qquad g_i \sim \mcN(0, 1) \text{ i.i.d.},
\end{eqnarray}
with $v_i := Z_{i,1} - Z_{i,2} \in \RR$.

\begin{proposition}[Scalar dichotomy, $d = 1$]
  \label{prop:sync_v_convergence}
  Let $\Set{v_i}_{i \ge 1}$ be the chain defined by \eqref{eq:kernel_corr_converegnce} with $v_1 \in \RR$ arbitrary, and let
  \begin{eqnarray*}
    r_c \;:=\; \frac{1}{\sqrt{2}\, e^{\gamma/2}} \;=\; \sqrt{\frac{1}{2 e^\gamma}} \;\approx\; 0.5298.
  \end{eqnarray*}
  \begin{itemize}
    \item[(i)] If $r > r_c$, then for every $v_1 \in \RR$, $v_i \to 0$ almost surely, and
      \begin{eqnarray*}
        \limsup_{i \to \infty} \frac{1}{i}\, \log v_i^2 \;\le\; -\gamma - \log 2 - 2\log r \;<\; 0 \quad \text{a.s.}
      \end{eqnarray*}
    \item[(ii)] If $r < r_c$ and $v_1 \ne 0$, then $v_i \not\to 0$ almost surely, and $\Set{v_i^2}$ admits a unique nontrivial stationary distribution on $(0, \infty)$, independent of $v_1$.
  \end{itemize}
\end{proposition}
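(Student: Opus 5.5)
Part (i) rests on a pathwise comparison, and part (ii) on a Foster--Lyapunov argument for the log-chain $L_i := \log v_i^2$.

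\emph{Part (i).} Since $1-e^{-x}\le x$, we have $2(1-\phi(v)) \le v^2/r^2$ for every $v$. Hence $v_{i+1}^2 \le v_i^2 g_i^2/r^2$ holds pathwise. If some $v_j=0$, the chain stays at $0$ from then on and there is nothing to prove; otherwise we take logarithms and iterate:
\begin{eqnarray*}
L_i \;\le\; L_1 + \sum_{j<i}\Brack{\log g_j^2 - 2\log r}.
\end{eqnarray*}
The summands are i.i.d. with finite mean $-\gamma-\log 2-2\log r$, because $\log g^2$ is integrable. The strong law of large numbers therefore gives the stated $\limsup$ bound, which is negative exactly when $r>r_c$.

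\emph{Part (ii), non-collapse.} Write the log-chain as $L_{i+1} = h(L_i) + \log g_i^2$ with $h(L) = \log\bigl(2(1-e^{-e^L/(2r^2)})\bigr)$. The function $h$ is increasing. Moreover $h(L) - (L - \log r^2) = \log\bigl((1-e^{-x})/x\bigr)$ with $x = e^L/(2r^2)$, and this tends to $0$ as $L\to-\infty$. Fix $\eps>0$ with $\rho := -\gamma-\log2-2\log r-\eps>0$, and choose $M$ so that $h(L)\ge L-\log r^2-\eps$ for all $L\le -M$. On the event that the chain stays below $-M$ from some time on, comparison with the random walk with increments $\log g_j^2-\log r^2-\eps$ (positive drift $\rho$) forces $L_i\to+\infty$, a contradiction. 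So the chain returns above $-M$ infinitely often. Combined with the drift estimate below, this yields recurrence and hence $v_i\not\to0$.

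\emph{Part (ii), stationary law.} We work with the chain $L_i$ on $\RR$. Its transition kernel has a strictly positive continuous density, namely the density of $h(L)+\log g^2$. It follows that the chain is Lebesgue-irreducible, aperiodic and weak Feller, and compact sets are small. We then look for $W\ge1$ with $PW \le W - 1 + b\,1_C$ for a compact $C$. At the right end, $h\le \log 2$, so $PW(L)$ is bounded uniformly in $L$; any $W$ that grows at $+\infty$ is therefore fine there. At the left end we need a function that decreases under a positive drift, and we take $W(L)=1+c|L|$ for $L\le -M$. Then
\begin{eqnarray*}
PW(L) \;\approx\; W(L) - c\rho
\end{eqnarray*}
for $L\ll 0$, provided $\Exp{|\log g^2|}<\infty$ and the contribution of large upward jumps that cross to the positive side is controlled. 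That contribution is handled by using $W(L)=1+c|L|$ on all of $\RR$; for $L>M$, $PW$ stays bounded while $W(L)$ is large. Choosing $c$ so that $c\rho\ge 1$ gives the drift condition. Meyn--Tweedie then yields positive Harris recurrence, a unique invariant probability, and convergence in total variation from every initial point $v_1\ne0$. Pushing forward under $L\mapsto e^L$ gives the stationary law of $v_i^2$ on $(0,\infty)$, independent of $v_1$.

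\emph{Main obstacle.} The delicate step is the left-end drift computation. The nonlinearity $h$ is only asymptotically a translation, and the tails of $\log g^2$ (heavy on the left, because $g^2$ can be very small) must be integrable against $W$. I would first verify that $\Exp{|\log g^2|}<\infty$ and bound $h(L)$ below by $L-\log r^2-\eps$ uniformly on $(-\infty,-M]$. The expectation $PW(L)$ then splits into the event $\{L+\text{increment}\le 0\}$, where linearity of $|\cdot|$ gives exact drift $-c\rho$, and its complement. On the complement, the excess $2c(\text{new value})^+$ is bounded by $2c\,\Exp{(\log g^2 + \log 2)^+\,1\{\text{jump}>|L|\}}$, which tends to $0$ as $L\to-\infty$ by dominated convergence.
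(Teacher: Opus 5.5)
Your proposal is correct and follows essentially the same route as the paper. Part (i) uses the pathwise bound from $1-e^{-x}\le x$ plus the SLLN; non-collapse uses an SLLN comparison near $L=-\infty$; the stationary law comes from Foster--Lyapunov drift with a linearly growing function of $L$ (your $1+c|L|$ in place of the paper's tent $(L_0-L)^+ + (L-L^0)^+$), driven by the positive drift and integrable overshoot on the left and by the cap $h\le\log 2$ together with monotonicity of $h$ on the right. The differences are cosmetic: your union over ``stays below $-M$ from time $N$ on'' replaces the paper's hitting-time/strong-Markov iteration, you should take $c\rho>1$ strictly because the overshoot term is $o(1)$ but positive, and returning above $-M$ infinitely often already gives $v_i\not\to 0$ without the drift estimate.
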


\begin{proof}[Proof of \Cref{prop:sync_v_convergence}]
  Set $u_i := v_i^2 \ge 0$, $L_i := \log u_i$, and $F(u) := 2(1 - e^{-u/(2r^2)})$. Then \eqref{eq:kernel_corr_converegnce} reads
  \begin{eqnarray}
    \label{eq:u_chain}
    u_{i+1} \;=\; F(u_i)\,g_i^2.
  \end{eqnarray}
  If $u_1 = 0$ then $u_i \equiv 0$; otherwise $u_i > 0$ a.s.\ for all $i$ (since $g_i \ne 0$ a.s.), and \eqref{eq:u_chain} gives
  \begin{eqnarray}
    \label{eq:L_chain}
    L_{i+1} & = & L_i + H(L_i) + \log g_i^2,
  \end{eqnarray}
  where $H(L) := \log\Brack{F(e^L)/e^L}$.
  The function $H$ is continuous on $\RR$, satisfies $H(L) \to -2\log r$ as $L \to -\infty$ (since $F(u)/u \to 1/r^2$ as $u \to 0$), and obeys the uniform upper bound
  \begin{eqnarray}
    \label{eq:H_upper}
    H(L) \;\le\; -2\log r \qquad \text{for all } L \in \RR,
  \end{eqnarray}
  which follows from $1 - e^{-x} \le x$. For $g \sim \mcN(0,1)$, $\Exp{\log g^2} = \psi(1/2) + \log 2 = -\gamma - \log 2$; set $\rho := -\gamma - \log 2 - 2\log r$, so that $r > r_c$ iff $\rho < 0$, and $r < r_c$ iff $\rho > 0$.

  \emph{Proof of (i).} Assume $r > r_c$, so $\rho < 0$. From \eqref{eq:L_chain} and \eqref{eq:H_upper},
  \begin{eqnarray*}
    L_i \;\le\; L_1 + \sum_{j=1}^{i-1}\Brack{-2\log r + \log g_j^2}.
  \end{eqnarray*}
  We invoke Kolmogorov's strong law of large numbers for i.i.d.\ sequences (Theorem~2.4.1 of \cite{durrett2019probability}): if $\Set{Y_j}_{j \ge 1}$ are i.i.d.\ with $\Exp{|Y_1|} < \infty$, then $n^{-1}\sum_{j=1}^n Y_j \to \Exp{Y_1}$ almost surely. With $Y_j := -2\log r + \log g_j^2$ the sequence is i.i.d.\ (since the $g_j$ are), and the integrability hypothesis $\Exp{|\log g^2|} < \infty$ holds by \Cref{rem:log_g_sq_integrable}. The SLLN therefore yields $(i-1)^{-1}\sum_{j=1}^{i-1}(-2\log r + \log g_j^2) \to \rho$ a.s., whence $\limsup_{i\to\infty} L_i/i \le \rho < 0$ a.s. So $u_i \to 0$ a.s.\ at exponential rate at least $|\rho|$, i.e.\ $v_i \to 0$ a.s.

  \emph{Proof of (ii), non-convergence.} Assume $r < r_c$, so $\rho > 0$. Fix $\eps \in (0, \rho/2)$. Since $H(L) \to -2\log r$ as $L \to -\infty$, there exists $L_* < 0$ such that
  \begin{eqnarray}
    \label{eq:H_lower}
    H(L) \;\ge\; -2\log r - \eps \qquad \text{for all } L \le L_*;
  \end{eqnarray}
  write $\delta := e^{L_*} > 0$. Fix $u_1 > 0$ and define the hitting time
  \begin{eqnarray*}
    \tau \;:=\; \inf\Set{i \ge 1 : u_i > \delta}.
  \end{eqnarray*}
  We claim $\tau < \infty$ a.s. On the event $\Set{\tau = \infty}$, $L_i \le L_*$ for every $i$, so \eqref{eq:L_chain} and \eqref{eq:H_lower} give the pathwise lower bound
  \begin{eqnarray}
    \label{eq:L_lower_on_tau}
    L_{n+1} & \ge & L_1 + \sum_{j=1}^n\Brack{-2\log r - \eps + \log g_j^2} \nonumber \\
    & = & L_1 + n\Brack{\rho - \eps} + S_n,
  \end{eqnarray}
  where $S_n := \sum_{j=1}^n(\log g_j^2 - \Exp{\log g^2})$ is a zero-mean random walk with i.i.d.\ increments. By the same SLLN as in part~(i), applied \emph{unconditionally} to the i.i.d.\ sequence $\Set{\log g_j^2 - \Exp{\log g^2}}_{j \ge 1}$ on the underlying probability space, $\Prob{\Omega_0} = 1$ for $\Omega_0 := \Set{S_n/n \to 0}$; we do \emph{not} condition on $\Set{\tau = \infty}$ before invoking the SLLN, since under that conditioning the $g_j$ would no longer be i.i.d. Now consider any sample point $\omega \in \Set{\tau = \infty} \cap \Omega_0$. The pathwise bound \eqref{eq:L_lower_on_tau} together with $S_n(\omega)/n \to 0$ and $\rho - \eps > \rho/2 > 0$ force $L_{n+1}(\omega) \to +\infty$, contradicting $L_{n+1}(\omega) \le L_*$ for all $n$ (which holds since $\omega \in \Set{\tau = \infty}$). Hence $\Set{\tau = \infty} \cap \Omega_0 = \emptyset$, and combined with $\Prob{\Omega_0} = 1$ this gives $\Prob{\tau = \infty} = 0$.

  Iterate: set $\tau_0 := 0$ and $\tau_{k+1} := \inf\Set{i > \tau_k : u_i > \delta}$. By the strong Markov property applied at $\tau_k + 1$ (from which point the chain restarts from the positive state $u_{\tau_k + 1}$), the same argument gives $\tau_{k+1} < \infty$ a.s.\ on $\Set{\tau_k < \infty}$. By induction every $\tau_k$ is a.s.\ finite, so $u_i > \delta$ for infinitely many $i$ a.s.; in particular
  \begin{eqnarray*}
    \limsup_{i\to\infty} u_i \;\ge\; \delta \;>\; 0 \qquad \text{a.s.,}
  \end{eqnarray*}
  so $v_i = \pm\sqrt{u_i}$ does \emph{not} converge to $0$ a.s.

  \emph{Proof of (ii), stationary distribution.} We establish positive Harris recurrence via a Foster--Lyapunov drift argument; we briefly recall the framework before applying it.

  Let $\Set{X_i}$ be a Markov chain on a state space $\mcX$ with transition kernel $P(x, \cdot)$. The chain is \emph{$\psi$-irreducible} if there exists a non-trivial measure $\psi$ on $\mcX$ such that for every $x \in \mcX$ and every measurable $A$ with $\psi(A) > 0$, the chain reaches $A$ from $x$ with positive probability in some finite number of steps; intuitively, no part of the state space is invisible from any starting point. A set $C \subset \mcX$ is \emph{small} if there exist $m \ge 1$, $\eta > 0$, and a probability measure $\nu$ such that $P^m(x, \cdot) \ge \eta\,\nu(\cdot)$ for every $x \in C$, i.e.\ $C$ is a uniform regeneration region: from anywhere in $C$, the $m$-step law dominates a fixed probability measure. For a $\psi$-irreducible chain on a general state space there is a notion of \emph{period} $d \ge 1$, defined via the $d$-cycle of small sets associated with the chain (see Section~5.4 of \cite{meyn1993markov}, where the period is defined for general state-space $\psi$-irreducible chains), and the chain is \emph{aperiodic} precisely when $d = 1$. A convenient sufficient condition --- which we will use --- is that some small set $C$ admit a one-step minorisation $P(x, \cdot) \ge \eta\,\nu$ for every $x \in C$ (i.e.\ the minorising step $m$ above can be taken equal to $1$); a chain satisfying this is called \emph{strongly aperiodic}, and strong aperiodicity implies aperiodicity.

  Two results from \cite{meyn1993markov} combine to give positive recurrence and convergence to equilibrium under a Lyapunov drift condition. First, \emph{Foster's drift criterion} (Theorem~11.3.4 of \cite{meyn1993markov}): if the chain is $\psi$-irreducible and there exist a measurable function $V : \mcX \to [0, \infty)$, a small set $C$, and constants $c > 0$, $b < \infty$, such that the drift condition
  \begin{eqnarray}
    \label{eq:foster_lyapunov_drift}
    \Exp{V(X_{i+1}) \cond X_i = x} \;\le\; V(x) - c + b\,\mathbf{1}_C(x)
  \end{eqnarray}
  holds for every $x \in \mcX$, then the chain is positive Harris recurrent and admits a unique invariant probability measure $\pi$. Second, the \emph{Aperiodic Ergodic Theorem} (Theorem~13.0.1 of \cite{meyn1993markov}): once the chain is positive Harris recurrent, aperiodic, and has invariant probability $\pi$, the law of $X_i$ converges to $\pi$ in total variation from every starting point $x \in \mcX$,
  \begin{eqnarray*}
    \norm{P^i(x, \cdot) - \pi}_{TV} \to 0 \qquad \text{as } i \to \infty.
  \end{eqnarray*}
  The function $V$ plays the role of a Lyapunov function: it measures a generalised ``distance'' to the small set $C$, and \eqref{eq:foster_lyapunov_drift} asserts that this distance shrinks on average by at least $c$ outside $C$, while inside $C$ it may grow but only by a bounded amount $b$. Together these prevent both escape to infinity and trapping at the boundary of $\mcX$, forcing the chain to return to $C$ infinitely often with controlled hitting times --- which is exactly what positive recurrence requires.

  We apply this framework to the chain $\Set{u_i}$ on $\mcX = (0, \infty)$ with transition $u \mapsto F(u)\,g^2$. Since $g^2$ has a smooth strictly positive Lebesgue density on $(0,\infty)$ and $F(u) > 0$ for $u > 0$, the one-step transition law of $u_{i+1}$ given $u_i = u$ has a strictly positive Lebesgue density on $(0,\infty)$, smoothly depending on $u$. This has three consequences. \emph{(a)} The chain is $\psi$-irreducible with respect to Lebesgue measure on $(0, \infty)$: any set of positive Lebesgue measure has positive one-step transition probability from every $u \in (0,\infty)$. \emph{(b)} Every compact $K \subset (0,\infty)$ is a small set with $m = 1$: the one-step density is bounded below by some $\eta_K > 0$ uniformly on $K \times K$ by continuity, hence $P(u, \cdot) \ge \eta_K\,|K|\cdot \mathrm{Unif}(K)$ for every $u \in K$. \emph{(c)} The chain is aperiodic: $m = 1$ in (b) gives the strong aperiodicity condition. Since $L = \log u$ is a homeomorphism between $(0,\infty)$ and $\RR$, all of these properties transfer verbatim to the log-chain $\Set{L_i}$ on $\RR$ (with Lebesgue measure pulled back to Lebesgue on $\RR$).

  For the Lyapunov function we work in log-coordinates, where the multiplicative dynamics become additive (cf.\ \eqref{eq:L_chain}). Define
  \begin{eqnarray*}
    V(L) \;:=\; (L_0 - L)^+ + (L - L^0)^+,
  \end{eqnarray*}
  where $L_0 > L_*$ and $L^0 > L_0$ are to be fixed. This is a ``tent'' function: $V \equiv 0$ on the strip $L \in [L_0, L^0]$, and $V$ grows linearly outside it (with slope $-1$ to the left of $L_0$ and slope $+1$ to the right of $L^0$). Geometrically, $V(L)$ is the distance from $L$ to the strip $[L_0, L^0]$, which in $u$-coordinates is the compact interval $[e^{L_0}, e^{L^0}] \subset (0,\infty)$ --- a small set by the previous paragraph. Verifying \eqref{eq:foster_lyapunov_drift} thus amounts to showing that $\Exp{V(L_{i+1}) \cond L_i = L} < V(L) - c$ for $L$ in the two tails $\Set{L \le L_*}$ and $\Set{L \ge L^0 + 2\kappa}$, with the in-between region $C := [L_*, L^0 + 2\kappa]$ acting as the small set on which $V$ is allowed to grow by the constant $b$. The mechanisms in the two tails are different: on the \emph{left} ($L \le L_*$) the linearisation-based positive drift $\rho - \eps$ on $L$ from \eqref{eq:H_lower} pushes $L_{i+1}$ rightward, decreasing $V$; on the \emph{right} ($L \ge L^0$) the global cap $u_{i+1} \le 2 g_i^2$ resets $L_{i+1}$ to a state-independent random level of bounded mean, which sits far below $L^0$ once $L^0$ is large, again decreasing $V$. We verify these now.

  \emph{Region 1: $L \le L_*$.} Here $V(L) = L_0 - L$. Using $(a)^+ = a + (-a)^+$,
  \begin{eqnarray*}
    \lefteqn{\Exp{(L_0 - L_{i+1})^+ \cond L_i = L}} \\
    & = & L_0 - \Exp{L_{i+1} \cond L_i = L} \\
    & & {}+ \Exp{(L_{i+1} - L_0)^+ \cond L_i = L}.
  \end{eqnarray*}
  By \eqref{eq:L_chain} and \eqref{eq:H_lower}, $\Exp{L_{i+1} \cond L_i = L} = L + H(L) - \gamma - \log 2 \ge L + (\rho - \eps)$. By \eqref{eq:H_upper}, $L_{i+1} \le L - 2\log r + \log g_i^2$, so for $L \le L_*$,
  \begin{eqnarray*}
    \Exp{(L_{i+1} - L_0)^+ \cond L_i = L} & \le & \Exp{(\log g^2 - M)^+},
  \end{eqnarray*}
  where $M := L_0 - L_* + 2\log r$.
  Since $\log g^2$ has finite mean, $\Exp{(\log g^2 - M)^+} \to 0$ as $M \to \infty$; choose $L_0$ large enough that this is $\le (\rho - \eps)/4$, and by the same bound applied with $L^0$ in place of $L_0$, also $\Exp{(L_{i+1} - L^0)^+ \cond L_i = L} \le (\rho - \eps)/4$ for $L \le L_*$. Using the identity $(L_0 - L_{i+1})^+ = (L_0 - L_{i+1}) + (L_{i+1} - L_0)^+$ together with $L_0 - \Exp{L_{i+1} \cond L_i = L} \le L_0 - L - (\rho - \eps) = V(L) - (\rho - \eps)$ (valid since $V(L) = L_0 - L$ for $L \le L_*$), we obtain
  \begin{eqnarray*}
    \lefteqn{\Exp{(L_0 - L_{i+1})^+ \cond L_i = L}} \\
    & \le & V(L) - (\rho - \eps) + (\rho - \eps)/4 \\
    & = & V(L) - 3(\rho - \eps)/4.
  \end{eqnarray*}
  Adding $\Exp{(L_{i+1} - L^0)^+ \cond L_i = L} \le (\rho - \eps)/4$ gives the drift bound
  \begin{eqnarray}
    \label{eq:drift_left}
    \Exp{V(L_{i+1}) \cond L_i = L} & \le & V(L) - (\rho - \eps)/2
  \end{eqnarray}
  for all $L \le L_*$, i.e.\ $c_1 := (\rho - \eps)/2 > 0$ on $\Set{L \le L_*}$.

  \emph{Region 2: $L \ge L^0$.} Here $V(L) = L - L^0$. From \eqref{eq:u_chain}, $u_{i+1} \le 2 g_i^2$, so $L_{i+1} \le \log 2 + \log g_i^2$ regardless of $L_i$. On Region 2 the lower bound $u_{i+1} \ge F(e^{L^0})\, g_i^2$ also holds (since $F$ is increasing and $u_i \ge e^{L^0}$), giving $L_{i+1} \ge \log F(e^{L^0}) + \log g_i^2$. Combining, for every $L_i \ge L^0$ the new state satisfies
  \begin{eqnarray*}
    \Abs{L_{i+1}} \;\le\; \max\!\Brack{\log 2,\, -\log F(e^{L^0})} + \Abs{\log g_i^2}
    \;=:\; W_i,
  \end{eqnarray*}
  an envelope independent of $L_i$ with finite mean (since $\Exp{|\log g^2|} < \infty$ by \Cref{rem:log_g_sq_integrable}). Hence
  \begin{eqnarray*}
    V(L_{i+1}) \;\le\; \Abs{L_0 - L_{i+1}} + \Abs{L_{i+1} - L^0} \;\le\; 2 W_i + L_0 + L^0,
  \end{eqnarray*}
  so that
  \begin{eqnarray*}
    \kappa \;:=\; \Exp{2 W_i + L_0 + L^0} \;<\; \infty,
  \end{eqnarray*}
  and $\Exp{V(L_{i+1}) \cond L_i = L} \le \kappa$ for every $L \ge L^0$. The constant $\kappa$ is fixed by the chain and the choices of $L_0, L^0$.

  Now we use this $\kappa$ to delimit the small set. We want a strictly negative drift $\le -\kappa$ in Region 2, i.e.\ $\Exp{V(L_{i+1}) \cond L_i = L} \le V(L) - \kappa$, which (using the uniform bound above) holds whenever $V(L) \ge 2\kappa$. Since $V(L) = L - L^0$ in Region 2, this requires $L \ge L^0 + 2\kappa$ --- and this is precisely why the small set is taken to extend up to $L^0 + 2\kappa$ rather than just to $L^0$: the strip $[L^0, L^0 + 2\kappa]$ is a buffer in which $V(L)$ is too small (between $0$ and $2\kappa$) for the uniform-envelope bound to dominate it, so we cannot yet claim a useful negative drift there. Outside this buffer, on $\Set{L \ge L^0 + 2\kappa}$, we have the drift $\Exp{V(L_{i+1}) \cond L_i = L} \le \kappa \le V(L) - \kappa$.

  \emph{Region 3: the compact set $C := [L_*, L^0 + 2\kappa]$.} Here $V(L) \le \max(L_0 - L_*,\, 2\kappa)$. The map $L \mapsto \Exp{V(L_{i+1}) \cond L_i = L}$ is continuous in $L$ (the transition density is continuous and has finite first moment of $|L_{i+1}|$ locally in $L$, since $L_{i+1} = \log F(e^L) + \log g^2$ with $\log F(e^L)$ continuous and $\Exp{|\log g^2|} < \infty$), hence bounded on the compact $C$ by some $b < \infty$. On $C$ we make no claim of a negative drift; this is the small set on which $V$ is allowed to grow, contributing the term $b\,\mathbf{1}_C(L)$ in \eqref{eq:foster_lyapunov_drift}.

  Combining the three regions yields the drift condition
  \begin{eqnarray*}
    \Exp{V(L_{i+1}) \cond L_i = L} \;\le\; V(L) - c + b\,\mathbf{1}_C(L),
  \end{eqnarray*}
  for some $c > 0$ and $b < \infty$, exactly the form \eqref{eq:foster_lyapunov_drift}, with small set $C = [L_*, L^0 + 2\kappa]$. Together with the $\psi$-irreducibility verified above, Foster's drift criterion (Theorem~11.3.4 of \cite{meyn1993markov}) yields that $\Set{u_i}$ is positive Harris recurrent and admits a unique invariant probability measure $\pi$ on $(0, \infty)$. Aperiodicity was also verified above ($m = 1$ minorisation on every compact). The Aperiodic Ergodic Theorem (Theorem~13.0.1 of \cite{meyn1993markov}) then applies, giving that the law of $u_i$ converges to $\pi$ in total variation from every $u_1 > 0$; in particular $\pi$ is independent of $v_1$. Non-degeneracy $\pi((\delta,\infty)) > 0$ follows from the non-convergence step (the chain spends a positive fraction of time above $\delta$).
\end{proof}

\begin{remark}
  \label{rem:log_g_sq_integrable}
  For $g \sim \mcN(0, 1)$, $\Exp{|\log g^2|} < \infty$ (used in the proof of (i) and in the non-convergence step of (ii)). Indeed, $g^2 \sim \chi^2_1$ has density $f(x) = (2\pi x)^{-1/2} e^{-x/2}$ on $(0, \infty)$, and
  \begin{eqnarray*}
    \Exp{|\log g^2|} \;=\; \int_0^\infty |\log x|\,(2\pi x)^{-1/2} e^{-x/2}\,dx.
  \end{eqnarray*}
  Split at $x = 1$. The upper tail $\int_1^\infty (\log x)(2\pi x)^{-1/2} e^{-x/2}\,dx$ is finite because $(\log x)\,x^{-1/2} e^{-x/2}$ decays exponentially. For the lower tail, $|\log x| = -\log x$ on $(0,1)$ and $e^{-x/2} \le 1$, so
  \begin{eqnarray*}
    \int_0^1 (-\log x)\,(2\pi x)^{-1/2} e^{-x/2}\,dx \;\le\; (2\pi)^{-1/2}\!\int_0^1 \frac{-\log x}{\sqrt{x}}\,dx.
  \end{eqnarray*}
  Substitute $u := -\log x$, so $x = e^{-u}$, $dx = -e^{-u}\,du$, $\sqrt{x} = e^{-u/2}$, and the limits $x \in (0,1)$ map to $u \in (\infty, 0)$. The integrand transforms as
  \begin{eqnarray*}
    \frac{-\log x}{\sqrt{x}}\,dx \;=\; \frac{u}{e^{-u/2}}\cdot(-e^{-u}\,du) \;=\; -u\,e^{-u/2}\,du,
  \end{eqnarray*}
  and the sign flip cancels against the reversed limits, yielding
  \begin{eqnarray*}
    \int_0^1\frac{-\log x}{\sqrt{x}}\,dx \;=\; \int_0^\infty u\,e^{-u/2}\,du \;=\; 4.
  \end{eqnarray*}
  The substitution thus replaces the logarithmic singularity at $x = 0$ by a linear factor $u$ against an exponentially decaying weight, which is harmless. Combined with finiteness of the upper tail, this gives $\Exp{|\log g^2|} < \infty$, and a fortiori $\Exp{\log g^2}$ is well-defined and equals $-\gamma - \log 2$.
\end{remark}

\begin{proof}[Proof of \Cref{thm:sync_v_convergence_d}]
  \emph{Reduction to \eqref{eq:kernel_corr_converegnce_d}.} Conditional on $\bZ_i$, the $k$-th coordinate $V_{i+1}^{(k)} = G_i^{(k)}(\bZ_{i,1}) - G_i^{(k)}(\bZ_{i,2})$ is centred Gaussian with variance $2(1 - \phi(v_i))$, since $G_i^{(k)}(z) \sim \mcN(0,1)$ pointwise and $\Cov(G_i^{(k)}(\bZ_{i,1}), G_i^{(k)}(\bZ_{i,2})) = \phi(\norm{\bZ_{i,1} - \bZ_{i,2}}) = \phi(v_i)$. The coordinates are independent across $k$ because the GPs $G_i^{(1)},\ldots,G_i^{(d)}$ are independent. Hence
  \begin{eqnarray*}
    V_{i+1} \;=\; \sqrt{2(1-\phi(v_i))}\,\bg_i, \qquad \bg_i \sim \mcN(0, I_d) \text{ ind.\ of $v_i$,}
  \end{eqnarray*}
  and $v_{i+1}^2 = 2(1-\phi(v_i))\,\norm{\bg_i}^2$ with $\norm{\bg_i}^2 \sim \chi^2_d$. By rotation invariance the conditional law of $v_{i+1}^2$ given $\bZ_i$ depends only on $v_i$, so $\Set{v_i^2}$ is Markov with transition law \eqref{eq:kernel_corr_converegnce_d}.

  \emph{Dichotomy.} The proof of \Cref{prop:sync_v_convergence} extends to the $\RR^d$ chain by substituting $X_i \sim \chi^2_d$ for $g_i^2$ throughout. The functional ingredients of the argument --- the map $F(u) = 2(1 - e^{-u/(2r^2)})$, the log-chain function $H(L) = \log(F(e^L)/e^L)$, the uniform upper bound \eqref{eq:H_upper}, the boundary linearisation $H(L) \to -2\log r$ as $L \to -\infty$, and the Lyapunov function $V$ --- depend only on the recursion \eqref{eq:kernel_corr_converegnce_d} and not on the specific law of the noise, so they remain unchanged. The only quantity that changes is the noise mean: for $X \sim \chi^2_d = 2\,\mathrm{Gamma}(d/2,1)$,
  \begin{eqnarray*}
    \Exp{\log X} \;=\; \psi(d/2) + \log 2,
  \end{eqnarray*}
  finite for every $d \ge 1$. Define $\rho_d := \psi(d/2) + \log 2 - 2\log r$, so $r > r_c(d) \iff \rho_d < 0$ and $r < r_c(d) \iff \rho_d > 0$. The pointwise bound $H(L) \le -2\log r$ from \eqref{eq:H_upper} is unchanged (it uses only $1 - e^{-x} \le x$); the limit $H(L) \to -2\log r$ as $L \to -\infty$ is unchanged. The SLLN argument of part (i) goes through verbatim with $\log g_j^2$ replaced by $\log X_j$ and $\Exp{\log g^2}$ replaced by $\Exp{\log X}$, giving $\limsup_i L_i / i \le \rho_d$ a.s. The non-convergence step in part (ii) is identical (zero-mean random walk $S_n := \sum_{j=1}^n (\log X_j - \Exp{\log X})$ obeys $S_n / n \to 0$ a.s.\ by the SLLN, and the strong-Markov iteration is unchanged).

  For the Foster--Lyapunov step in part (ii): the transition law of $\Set{u_i = v_i^2}$ given $u_i = u > 0$ is the law of $2(1-\phi(\sqrt u))\,X$ with $X \sim \chi^2_d$. The $\chi^2_d$ density $\frac{1}{2^{d/2}\Gamma(d/2)} x^{d/2-1} e^{-x/2}$ is strictly positive and continuous on $(0,\infty)$ for every $d \ge 1$, so this transition law has a strictly positive Lebesgue density on $(0,\infty)$, smoothly depending on $u$. The same three-fold consequence as in the proof of \Cref{prop:sync_v_convergence} holds: \emph{(a)} the chain $\Set{u_i}$ is $\psi$-irreducible w.r.t.\ Lebesgue measure on $(0,\infty)$; \emph{(b)} every compact subset of $(0,\infty)$ is small with $m = 1$; \emph{(c)} the chain is strongly aperiodic. The Lyapunov function $V(L) = (L_0 - L)^+ + (L - L^0)^+$ from the proof of \Cref{prop:sync_v_convergence} works unchanged, with $g_i^2$ replaced by $X_i \sim \chi^2_d$ in every drift computation; the only input needed is $\Exp{(\log X - M)^+} \to 0$ as $M \to \infty$, which holds because $\Exp{|\log X|} < \infty$ (the verification is identical to \Cref{rem:log_g_sq_integrable}, using the $\chi^2_d$ density in place of $\chi^2_1$). The drift condition \eqref{eq:foster_lyapunov_drift} is therefore satisfied with the same small set $C = [L_*, L^0 + 2\kappa]$ (where now $\kappa$ depends on the $\chi^2_d$ envelope), and Foster's drift criterion (Theorem~11.3.4 of \cite{meyn1993markov}) yields that $\Set{u_i}$ is positive Harris recurrent with a unique invariant probability $\pi^{\bv}$ on $(0,\infty)$. The Aperiodic Ergodic Theorem (Theorem~13.0.1 of \cite{meyn1993markov}), whose hypotheses (positive Harris recurrence, aperiodicity, invariant probability) are now in place, then gives convergence of the law of $u_i$ to $\pi^{\bv}$ in total variation from every $u_1 > 0$; in particular $\pi^{\bv}$ is independent of $v_1$.
\end{proof}

\begin{proof}[Proof of \Cref{thm:sync_Z_convergence}]
  The proof has two parts: (I) ergodicity of the pairwise-distance chain via a sum-of-pairs Lyapunov function, and (II) transfer from $\bu^{(i)}$ to $\bZ_i$ via a mixture bound.

  \emph{Part I: ergodicity of $\Set{\bu^{(i)}}$.}

  \emph{I.1: Markov property and marginal structure.} Conditional on $\bZ_i$, $\bZ_{i+1} \sim \mcN(0, R(\bZ_i) \otimes I_d)$ with $R(\bZ_i)_{st} = \phi(\sqrt{u_{st}^{(i)}})$, a function of $\bu^{(i)}$ alone. Hence $\bu^{(i+1)}$ is a function of $\bZ_{i+1}$ whose conditional distribution given $\bZ_i$ depends on $\bZ_i$ only through $\bu^{(i)}$, so $\Set{\bu^{(i)}}$ is Markov on $(0,\infty)^{\binom{n}{2}}$. For each fixed pair $(s,t)$,
  \begin{eqnarray*}
    V_{st}^{(i+1)} := Z_{i+1,s} - Z_{i+1,t} \cond \bZ_i \;\sim\; \mcN\!\bigl(0,\, 2(1 - \phi(\sqrt{u_{st}^{(i)}}))\, I_d\bigr),
  \end{eqnarray*}
  because the $d$ output coordinates of $G_i$ are i.i.d.\ and the $(s,t)$-marginal of $R(\bZ_i) \otimes I_d$ contributes variance $2(1 - \phi(\sqrt{u_{st}^{(i)}}))$. Consequently
  \begin{eqnarray*}
    u_{st}^{(i+1)} \;=\; 2\bigl(1 - \phi(\sqrt{u_{st}^{(i)}})\bigr)\, X_{st}^{(i)}, \qquad X_{st}^{(i)} := \norm{V_{st}^{(i+1)}}^2 / \bigl(2(1 - \phi(\sqrt{u_{st}^{(i)}}))\bigr) \sim \chi^2_d \ \text{marginally},
  \end{eqnarray*}
  the same recursion as the scalar chain \eqref{eq:kernel_corr_converegnce_d}. Although $\Set{X_{st}^{(i)}}_{s<t}$ are jointly correlated (they share the single GP sample $G_i$), each is marginally $\chi^2_d$, and in particular the \emph{marginal} conditional distribution of $u_{st}^{(i+1)}$ given $\bu^{(i)}$ depends only on $u_{st}^{(i)}$ and is identical to the scalar transition kernel.

  \emph{I.2: Per-pair quadratic drift in log-coordinates.} Work in the log-chain $L_{st}^{(i)} := \log u_{st}^{(i)}$. By \emph{I.1}, the marginal recursion is
  \begin{eqnarray*}
    L_{st}^{(i+1)} \;=\; L_{st}^{(i)} + H(L_{st}^{(i)}) + \log X_{st}^{(i)}, \qquad X_{st}^{(i)} \sim \chi^2_d \ \text{marginally,}
  \end{eqnarray*}
  with $H(L) = \log(F(e^L)/e^L)$ as in the proof of \Cref{prop:sync_v_convergence}, $\psi' := \psi(d/2) + \log 2 = \Exp{\log X}$, and $\sigma^2 := \Var(\log X) = \psi_1(d/2) < \infty$ (the trigamma function is finite for every $d \ge 1$). Set $\mu(L) := H(L) + \psi' = \Exp{L_{st}^{(i+1)} - L_{st}^{(i)} \cond L_{st}^{(i)} = L}$ and define the scalar per-pair quadratic drift
  \begin{eqnarray*}
    \Delta(L) \;:=\; \Exp{(L_{st}^{(i+1)})^2 - (L_{st}^{(i)})^2 \cond L_{st}^{(i)} = L}.
  \end{eqnarray*}
  Expanding $(L + Y)^2 - L^2 = 2LY + Y^2$ with $Y := L_{st}^{(i+1)} - L_{st}^{(i)}$ and completing the square via $\mu + L = \log F(e^L) + \psi'$,
  \begin{eqnarray}
    \label{eq:per_pair_quad_drift}
    \Delta(L) \;=\; \mu(L)^2 + \sigma^2 + 2L\mu(L) \;=\; \Brack{\log F(e^L) + \psi'}^2 - L^2 + \sigma^2.
  \end{eqnarray}
  The function $\Delta$ is continuous on $\RR$ and $\Delta(L) \to -\infty$ as $|L| \to \infty$:
  \begin{itemize}
    \item[] \emph{Left tail.} As $L \to -\infty$, $F(u) = u/r^2 + O(u^2)$ gives $\log F(e^L) = L - 2\log r + o(1)$, so $\mu(L) \to \psi' - 2\log r = \rho_d > 0$ by subcriticality $r < r_c(d)$. Hence $\mu(L)^2$ and $\sigma^2$ stay bounded while $2L\mu(L) \sim 2L\rho_d \to -\infty$.
    \item[] \emph{Right tail.} As $L \to +\infty$, $F(e^L) \to 2$, so $\log F(e^L) + \psi'$ is bounded and $\Delta(L) = O(1) - L^2 \to -\infty$.
  \end{itemize}
  In particular $B := \sup_{L \in \RR} \Delta(L) < \infty$ (depending only on $r, d$), and we can choose $M > 0$ so that $\Delta(L) \le -B\binom{n}{2} - 1$ whenever $|L| \ge M$.

  Define the sum Lyapunov function on $(0,\infty)^{\binom{n}{2}}$,
  \begin{eqnarray*}
    V(\bu) \;:=\; \sum_{s<t} (\log u_{st})^2 \;\ge\; 0.
  \end{eqnarray*}
  By \emph{I.1}, the marginal conditional distribution of each $L_{st}^{(i+1)}$ given $\bu^{(i)}$ depends only on $L_{st}^{(i)}$, so the drift of $V$ decomposes as a sum of scalar per-pair drifts:
  \begin{eqnarray}
    \label{eq:sum_quad_drift}
    \Exp{V(\bu^{(i+1)}) \cond \bu^{(i)}} - V(\bu^{(i)}) \;=\; \sum_{s<t} \Delta(L_{st}^{(i)}).
  \end{eqnarray}

  \emph{I.3: Small set and drift off it.} Define $C := \Set{\bu \in (0,\infty)^{\binom{n}{2}} : \max_{s<t} \Abs{\log u_{st}} \le M}$, a compact subset of $(0,\infty)^{\binom{n}{2}}$. For $\bu \notin C$ there exists at least one pair $(s_0, t_0)$ with $\Abs{L_{s_0 t_0}} > M$; that pair contributes $\Delta(L_{s_0 t_0}) \le -B\binom{n}{2} - 1$ by the choice of $M$, while every other pair contributes at most $B$, giving
  \begin{eqnarray*}
    \sum_{s<t} \Delta(L_{st}) \;\le\; -B\binom{n}{2} - 1 + B\Brack{\binom{n}{2} - 1} \;=\; -B - 1 \;\le\; -1.
  \end{eqnarray*}
  For $\bu \in C$, the trivial bound $\sum_{s<t} \Delta(L_{st}) \le B \binom{n}{2}$ holds. Combining,
  \begin{eqnarray*}
    \Exp{V(\bu^{(i+1)}) \cond \bu^{(i)} = \bu} \;\le\; V(\bu) - 1 + \Brack{B\binom{n}{2} + 1}\,\mathbf{1}_C(\bu),
  \end{eqnarray*}
  which is the Foster--Lyapunov drift criterion (Theorem~11.3.4 of \cite{meyn1993markov}) on the state space $(0,\infty)^{\binom{n}{2}}$.

  \emph{I.4: Irreducibility, small-set minorisation, aperiodicity.} Fix $\bu \in C$. Given $\bZ_i$ with pairwise distances $\bu^{(i)} = \bu$, the joint law of $\bZ_{i+1}$ is $\mcN(0, R(\bu) \otimes I_d)$, absolutely continuous on $(\RR^d)^n$ with a strictly positive continuous density (since $R(\bu)$ is positive definite for $\bu \in (0,\infty)^{\binom{n}{2}}$: $R(\bu)_{ss} = 1$, $|R(\bu)_{st}| = \phi(\sqrt{u_{st}}) < 1$ for $u_{st} > 0$, and positive-definiteness of the RBF correlation kernel on any finite set of distinct points is standard; see Proposition~2 of \cite{dunlop2018how}). Pushing forward to $\bu^{(i+1)}$ through the continuous map $\bZ_{i+1} \mapsto \bu(\bZ_{i+1})$, the transition distribution of $\bu^{(i+1)}$ given $\bu^{(i)} = \bu$ has a continuous density on $(0,\infty)^{\binom{n}{2}}$ that is strictly positive jointly on any compact subset of the interior times $C$. This gives $\psi$-irreducibility of $\Set{\bu^{(i)}}$ with respect to Lebesgue measure on $(0,\infty)^{\binom{n}{2}}$, the small-set minorisation $P(\bu, \cdot) \ge \eta\, \nu(\cdot)$ on $C$ for some $\eta > 0$ and probability measure $\nu$, and strong aperiodicity ($m = 1$).

  \emph{I.5: Ergodic theorem.} Foster's drift criterion (Theorem~11.3.4 of \cite{meyn1993markov}) with the drift from \emph{I.3} and the small-set from \emph{I.4} yields positive Harris recurrence of $\Set{\bu^{(i)}}$ and existence of a unique invariant probability measure $\pi^{\bu}$ on $(0,\infty)^{\binom{n}{2}}$. The Aperiodic Ergodic Theorem (Theorem~13.0.1 of \cite{meyn1993markov}) then gives $\norm{\Law(\bu^{(i)}) - \pi^{\bu}}_{\mathrm{TV}} \to 0$ from every initial condition $\bu^{(1)} \in (0,\infty)^{\binom{n}{2}}$, i.e.\ from every $\bZ_1$ with pairwise-distinct coordinates. The marginal of $\pi^{\bu}$ at each pair $(s,t)$ is stationary for the scalar chain (by \emph{I.1}), hence equal to $\pi^{\bv}$ by uniqueness in \Cref{thm:sync_v_convergence_d}~(ii).

  \emph{Part II: transfer from $\bu^{(i)}$ to $\bZ_i$.}

  Denote $P(\bz, A) := \Prob{\bZ_{i+1} \in A \cond \bZ_i = \bz} = \mcN(0, R(\bz) \otimes I_d)(A)$. This depends on $\bz$ only through $\bu(\bz) = (\norm{z_s - z_t}^2)_{s<t}$, so there is a probability kernel $k(\bu, \cdot) := \mcN(0, R(\bu) \otimes I_d)$ on $(\RR^d)^n$ with $P(\bz, A) = k(\bu(\bz), A)$. By the Markov property,
  \begin{eqnarray*}
    \Prob{\bZ_{i+1} \in A} \;=\; \Exp{k(\bu^{(i)}, A)} \;=\; \int k(\bu, A)\, d\Law(\bu^{(i)})(\bu),
  \end{eqnarray*}
  and by \eqref{eq:sync_Z_limit_measure}, $\pi_{\bZ}(A) = \int k(\bu, A)\, d\pi^{\bu}(\bu)$. Subtracting and using $k(\bu, A) \in [0, 1]$,
  \begin{eqnarray*}
    \bigl|\Prob{\bZ_{i+1} \in A} - \pi_{\bZ}(A)\bigr| \;=\; \Big|\!\int k(\bu, A)\, d(\Law(\bu^{(i)}) - \pi^{\bu})(\bu)\Big| \;\le\; \norm{\Law(\bu^{(i)}) - \pi^{\bu}}_{\mathrm{TV}}.
  \end{eqnarray*}
  Taking the supremum over $A$ and shifting $i \mapsto i-1$ gives \eqref{eq:sync_Z_tv_bound}. By Part~I, the RHS tends to $0$ in TV.

  \emph{Stationarity and uniqueness of $\pi_{\bZ}$.} The pushforward of $\pi_{\bZ}$ under $\bz \mapsto \bu(\bz)$ is $\pi^{\bu}$ by construction of \eqref{eq:sync_Z_limit_measure}, so applying the identity $\Prob{\bZ_{i+1} \in A} = \int k(\bu, A)\, d\Law(\bu^{(i)})(\bu)$ to a hypothetical stationary $\Law(\bZ_i) = \pi_{\bZ}$ gives $\Law(\bZ_{i+1}) = \int k(\bu, A)\, d\pi^{\bu}(\bu) = \pi_{\bZ}$; so $\pi_{\bZ}$ is a stationary law. Conversely, if $\pi'$ is any stationary law of $\Set{\bZ_i}$, its pushforward under $\bz \mapsto \bu(\bz)$ is stationary for $\Set{\bu^{(i)}}$, hence equals $\pi^{\bu}$; then the mixture representation gives $\pi' = \pi_{\bZ}$.
\end{proof}

\begin{proof}[Proof of \Cref{prop:sync_not_gaussian}]
  \emph{Marginals.} For every $i \ge 1$ and $t \in \Set{1, 2}$, conditional on $\bZ_{i-1}$ each coordinate of $\bZ_{i,t} = G_{i-1}(Z_{i-1,t})$ is a marginal evaluation of a centred unit-variance GP, so $\bZ_{i,t} \cond \bZ_{i-1} \sim \mcN(0, \phi(0)\, I_d) = \mcN(0, I_d)$. The $(t,t)$-block is independent of $\bZ_{i-1}$, so unconditionally $\bZ_{i,t} \sim \mcN(0, I_d)$. Weak convergence then forces $\bZ_{\infty, t} \sim \mcN(0, I_d)$. It remains to prove non-joint-Gaussianity.

  \emph{Step 1: any Gaussian joint limit has isotropic $V_\infty$.} Let $V_i := \bZ_{i,1} - \bZ_{i,2} \in \RR^d$. Conditional on $\bZ_{i-1}$, the $d$ output coordinates of $G_{i-1}$ are i.i.d., so the $d$ coordinates of $V_i = G_{i-1}(\bZ_{i-1,1}) - G_{i-1}(\bZ_{i-1,2})$ are i.i.d.\ with marginal $\mcN(0, 2(1 - \phi(\norm{V_{i-1}})))$. Hence
  \begin{eqnarray*}
    V_i \cond \bZ_{i-1} \;\sim\; \mcN\!\bigl(0,\, 2(1 - \phi(\norm{V_{i-1}}))\, I_d\bigr),
  \end{eqnarray*}
  a scalar multiple of $I_d$ depending on $V_{i-1}$ only through $\norm{V_{i-1}}$. Therefore the unconditional law of $V_i$ is a mixture of isotropic Gaussians and is itself rotationally invariant in $\RR^d$ for every $i \ge 1$; so is its weak limit $V_\infty$. Suppose now, for contradiction, that $(\bZ_{\infty,1}, \bZ_{\infty,2})$ is jointly Gaussian. Then $V_\infty$ is a centred Gaussian in $\RR^d$; combined with rotational invariance, its covariance matrix is a scalar multiple of $I_d$, i.e.\ $V_\infty \sim \mcN(0, \sigma_v^2 I_d)$ for some $\sigma_v^2 \in [0, \infty)$. The case $\sigma_v^2 = 0$ gives $V_\infty = 0$ a.s., which contradicts \Cref{thm:sync_v_convergence_d}~(ii) (nontrivial stationary law on $(0, \infty)$), so $\sigma_v^2 > 0$. Therefore $u_\infty := \norm{V_\infty}^2 \stackrel d= \sigma_v^2\, \chi^2_d$, equivalently $u_\infty \stackrel d= e^\zeta \chi^2_d$ with $\zeta := \log \sigma_v^2 \in \RR$.

  \emph{Step 2: no $e^\zeta \chi^2_d$ is stationary for \eqref{eq:kernel_corr_converegnce_d}.} Suppose $u \stackrel d= e^\zeta X$ with $X \sim \chi^2_d$, and let $X_i \sim \chi^2_d$ be the independent innovation in \eqref{eq:kernel_corr_converegnce_d}, so that the one-step image is $u' = 2(1 - \phi(\sqrt u))\, X_i = A(X)\, X_i$ with
  \begin{eqnarray*}
    A(X) \;:=\; 2\bigl(1 - \phi(e^{\zeta/2}\sqrt X)\bigr) \;=\; 2\bigl(1 - e^{-e^\zeta X / (2 r^2)}\bigr).
  \end{eqnarray*}
  Stationarity requires $u' \stackrel d= e^\zeta X''$ with $X'' \sim \chi^2_d$, i.e.\ $A(X)\, X_i \stackrel d= e^\zeta X''$. Taking logarithms,
  \begin{eqnarray}
    \label{eq:sync_gaussian_conv_id}
    \log A(X) + \log X_i \;\stackrel d=\; \zeta + \log X'',
  \end{eqnarray}
  where $\log X_i$ and $\log X''$ are both distributed as $\log \chi^2_d$ and each is independent of the other terms on its side.

  Recall that the \emph{characteristic function} of a random variable $Y$ is $\varphi_Y(t) := \Exp{e^{i t Y}}$ for $t \in \RR$; $\varphi_Y$ is continuous with $\varphi_Y(0) = 1$, is multiplicative under convolution (so $\varphi_{Y_1 + Y_2} = \varphi_{Y_1}\, \varphi_{Y_2}$ for independent $Y_1, Y_2$), and uniquely determines the law of $Y$ (\cite{durrett2019probability}). Applying $\varphi$ to both sides of \eqref{eq:sync_gaussian_conv_id} and using independence yields, for every $t \in \RR$,
  \begin{eqnarray}
    \label{eq:sync_gaussian_cf_id}
    \varphi_{\log A(X)}(t)\, \varphi_{\log \chi^2_d}(t) \;=\; e^{i t \zeta}\, \varphi_{\log \chi^2_d}(t).
  \end{eqnarray}
  We claim $\varphi_{\log \chi^2_d}$ is non-vanishing on all of $\RR$. By the Mellin transform of the $\chi^2_d$ density,
  \begin{eqnarray*}
    \varphi_{\log \chi^2_d}(t) \;=\; \Exp{X^{i t}} \;=\; \frac{2^{i t}\, \Gamma(d/2 + i t)}{\Gamma(d/2)},
  \end{eqnarray*}
  and $\Gamma$ has no zeros on $\mathbb{C}$ (\cite{abramowitz1964handbook}), so $\Gamma(d/2 + i t) \neq 0$ for all $t \in \RR$ and the claim follows. Dividing \eqref{eq:sync_gaussian_cf_id} through by $\varphi_{\log \chi^2_d}(t)$ gives
  \begin{eqnarray*}
    \varphi_{\log A(X)}(t) \;=\; e^{i t \zeta} \qquad \text{for every } t \in \RR.
  \end{eqnarray*}
  The right-hand side is the characteristic function of the point mass $\delta_{\zeta}$, so by uniqueness of characteristic functions $\log A(X) \stackrel{d}{=} \delta_\zeta$, i.e.\ $A(X)$ is almost surely equal to $e^\zeta$. But $X \mapsto A(X) = 2(1 - e^{-e^\zeta X / (2 r^2)})$ is strictly increasing and non-constant on $(0, \infty)$, and $X \sim \chi^2_d$ has full support on $(0, \infty)$, so $A(X)$ is non-degenerate. Contradiction.

  \emph{Step 3: combining.} By Step~2, no $e^\zeta \chi^2_d$ is invariant under \eqref{eq:kernel_corr_converegnce_d}. Since the stationary law of $u_i$ is unique by \Cref{thm:sync_v_convergence_d}~(ii) and the law of $u_i = \norm{V_i}^2$ converges to this stationary law, $u_\infty$ is not of the form $e^\zeta \chi^2_d$ for any $\zeta \in \RR$. This contradicts the Gaussianity-plus-isotropy consequence of Step~1. Hence $(\bZ_{\infty,1}, \bZ_{\infty,2})$ is not jointly Gaussian.
\end{proof}

\begin{proof}[Proof of \Cref{cor:sync_log_stationary}]
  Finiteness of $\Expsubidx{\pi^{\bv}}{|L|}$: the Lyapunov function $V(L) = (L_0 - L)^+ + (L - L^0)^+$ from the proof of \Cref{prop:sync_v_convergence}~(ii) satisfies $\Expsubidx{\pi^{\bv}}{V} < \infty$ (standard for positive Harris recurrent chains with a Foster--Lyapunov drift function; Theorem~14.3.7 of \cite{meyn1993markov}), and $|L| \le V(L) + \max(|L_0|, |L^0|)$, so $\Expsubidx{\pi^{\bv}}{|L|} < \infty$.

  (a) We first verify that the expectations below are finite: as $L \to -\infty$, $H(L) = \log F(e^L) - L \to L - 2\log r - L + o(1) = -2\log r$ stays bounded, and as $L \to +\infty$, $\log F(e^L) \to \log 2$ stays bounded so $H(L) = \log 2 - L + o(1)$, giving the linear envelope $|H(L)| \le |L| + C$ for a constant $C = C(r)$. Hence $\Expsubidx{\pi^{\bv}}{|H(L)|} \le \Expsubidx{\pi^{\bv}}{|L|} + C < \infty$ since $\Expsubidx{\pi^{\bv}}{|L|} < \infty$ from the previous paragraph.
  Under $\pi^{\bv}$, $L_i \stackrel{d}{=} L_{i+1}$, so taking expectations in \eqref{eq:L_chain} (with $\log g_i^2$ replaced by $\log X_i$, $X_i \sim \chi^2_d$) and using $X_i \perp L_i$ gives $0 = \Expsubidx{\pi^{\bv}}{H(L)} + \Exp{\log X} = \Expsubidx{\pi^{\bv}}{H(L)} + \psi(d/2) + \log 2$.

  (b) We claim $H$ is strictly concave and strictly decreasing on $\RR$. With $x := e^L/(2 r^2)$,
  \begin{eqnarray*}
    H'(L) \;=\; \frac{x}{e^x - 1} - 1,
  \end{eqnarray*}
  and $d/dx[x/(e^x - 1)] = [e^x(1-x) - 1]/(e^x - 1)^2 < 0$ for $x > 0$ (the numerator is $0$ at $x = 0$ and has derivative $-x e^x < 0$). Since $dx/dL = x > 0$, $H'$ is strictly decreasing in $L$ and negative (as $H'(L) \to 0^-$ at $L \to -\infty$); hence $H$ is strictly concave and strictly decreasing. Jensen's inequality, combined with (a), gives
  \begin{eqnarray*}
    H(\Expsubidx{\pi^{\bv}}{L}) \;\ge\; \Expsubidx{\pi^{\bv}}{H(L)} \;=\; -\psi(d/2) - \log 2 \;=\; H(L_*),
  \end{eqnarray*}
  and monotonicity of $H$ inverts this to $\Expsubidx{\pi^{\bv}}{L} \le L_*$. The explicit form \eqref{eq:sync_L_star} follows from $H(L_*) = -\psi(d/2) - \log 2$, which upon exponentiating reads $F(e^{L_*})/e^{L_*} = 1/r_c(d)^2$: setting $\alpha := e^{L_*}/(2 r^2)$ yields $(1 - e^{-\alpha})/\alpha = (r/r_c(d))^2$. The map $\alpha \mapsto (1 - e^{-\alpha})/\alpha$ is smooth and strictly decreasing from $1$ to $0$ on $(0, \infty)$, so $\alpha$ is uniquely determined by $r < r_c(d)$. The limits as $r \downarrow 0$ (RHS $\to 0$, $\alpha \to \infty$, $(1-e^{-\alpha})/\alpha \sim 1/\alpha$, so $\alpha \sim (r_c/r)^2$ and $2 r^2 \alpha \to 2 r_c^2$) and $r \uparrow r_c(d)$ (RHS $\to 1$, $\alpha \to 0$, $L_* \to -\infty$) are direct.
\end{proof}

\begin{proof}[Proof of \Cref{cor:sync_subcrit_d_scaling}]
  $L_* = \log(2 r^2 \alpha)$ with $\alpha$ determined by $(1 - e^{-\alpha})/\alpha = (r/r_c(d))^2$; at $r = \lambda\,r_c(d)$ this RHS equals $\lambda^2$, so $\alpha = \alpha(\lambda)$ is independent of $d$. Then $L_* = \log(2\,\lambda^2 r_c(d)^2 \alpha(\lambda)) = \log(\lambda^2 \alpha(\lambda)) + \log 2 + \log r_c(d)^2 = \log(\lambda^2 \alpha(\lambda)) + \log 2 + (\psi(d/2) + \log 2)$.
\end{proof}

\section{Additional empirical figures}
\label{sec:appendix_more_figs}

This appendix collects the empirical figures supporting \Cref{sec:experiments} that are not in the main paper. All simulations are CPU-only Python (NumPy / SciPy / matplotlib / scikit-learn); the full set of experiments finishes well within an hour on a single laptop core.

\Cref{fig:appendix_trajectories_above} verifies the supercritical decay rate of \Cref{thm:sync_v_convergence_d}~(i) across $d \in \Set{1, 10, 100}$. \Cref{fig:appendix_trajectories_below,fig:appendix_stationary_multidim} are the $d = 1, 10$ companions of \Cref{fig:exp_d100_main}: subcritical trajectories of $\log v_i^2$ across all three dimensions, and the empirical stationary law of $\log v_i^2$ across all three dimensions. At $d = 1$ near-critical $\lambda$ the convergence becomes very slow and the stationary law has an extremely heavy left tail. \Cref{fig:appendix_trajectories_std} reports the sample std of $\log v_i^2$ across the $300$ i.i.d.\ replicates used to compute the mean trajectories in \Cref{fig:appendix_trajectories_above,fig:appendix_trajectories_below}; dividing this std by $\sqrt{300}$ yields the standard error of the corresponding mean trajectory estimator.

The remaining figures are per-chain visualisations of $\pi_{\bZ}$, all using $n = 1000$ and five i.i.d.\ chains per $\lambda$, with $\lambda$ values chosen per dimension to span the regime transition (see ``From bounds to a choice of $\lambda$'' in \Cref{sec:experiments}): $d = 1$ at $\lambda \in \Set{0.10, 0.30, 0.60, 0.85}$ and depth $i = 300$, $d = 10$ at $\lambda \in \Set{0.66, 0.91, 0.95, 0.97}$ and depth $i = 600$, $d = 100$ at $\lambda \in \Set{0.90, 0.99, 0.995}$ and depth $i = 1000$. The smallest $\lambda$ in each panel is the near-independent Gaussian baseline against which the structure at the larger $\lambda$ should be read.

\Cref{fig:appendix_pi_Z_pca_d100} is the linear-PCA companion to the main-paper t-SNE figure at $d = 100$: PC1--PC2 of the same chains as \Cref{fig:exp_pi_Z_tsne_main}, with no non-linear embedding step. The chain-specific bananas, U-shapes, and arcs that appear at $\lambda = 0.99, 0.995$ in the t-SNE figure are also visible in PCA, confirming that the structure is in the data and not invented by t-SNE. See also \Cref{fig:appendix_pca_density_d100}, where we show the \emph{density} rather than just the scatter of the PCA plots, which makes the multimodality even clearer. \Cref{fig:appendix_pi_Z_hist_d1} shows the $d = 1$ case as a histogram of the $n$ scalar entries of $\bZ_{i, t}$ per chain (the top PC of $1$-D data is the data itself). \Cref{fig:appendix_pi_Z_pca_d10,fig:appendix_pi_Z_tsne_d10} show per-chain PCA and t-SNE at $d = 10$, where the visible-structure window is wider than at $d = 100$ and the transition Gaussian-cloud $\to$ clusters $\to$ ribbons spans four $\lambda$ rows. \Cref{fig:appendix_pca_density_d10} shows the empirical density of the PCA projections at $d = 10$.

\begin{figure}[t]
  \centering
  \includegraphics[width=0.95\textwidth]{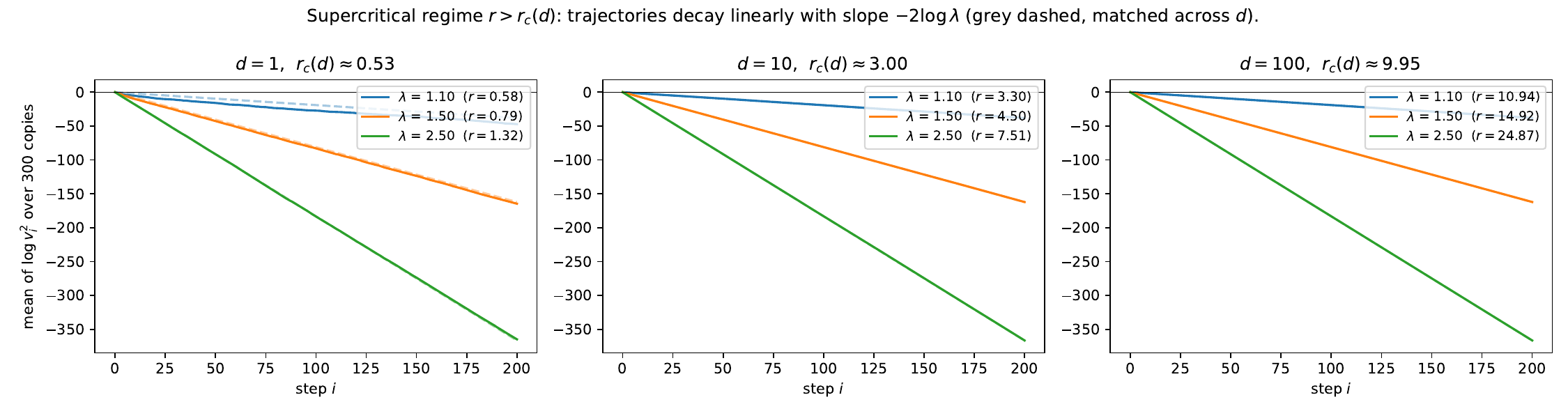}
  \caption{Supercritical regime $r > r_c(d)$: trajectories of $\log v_i^2$ for $\lambda \in \Set{1.1, 1.5, 2.5}$ across $d = 1, 10, 100$. Each curve is the mean of $\log v_i^2$ over $300$ i.i.d.\ trajectories starting at $v_1 = 1$; the grey dashed reference line has slope $-2\log\lambda$ predicted by \Cref{thm:sync_v_convergence_d}~(i), and is hidden behind the simulated curve in every case. At matched $\lambda$, the decay rate is the same across dimensions, confirming that the rate bound from \Cref{thm:sync_v_convergence_d} is $d$-independent in units of $\lambda$. The $d = 100$ pane is the same as the left pane of \Cref{fig:exp_d100_main}, repeated here for direct cross-$d$ comparison.}
  \label{fig:appendix_trajectories_above}
\end{figure}

\begin{figure}[t]
  \centering
  \includegraphics[width=0.95\textwidth]{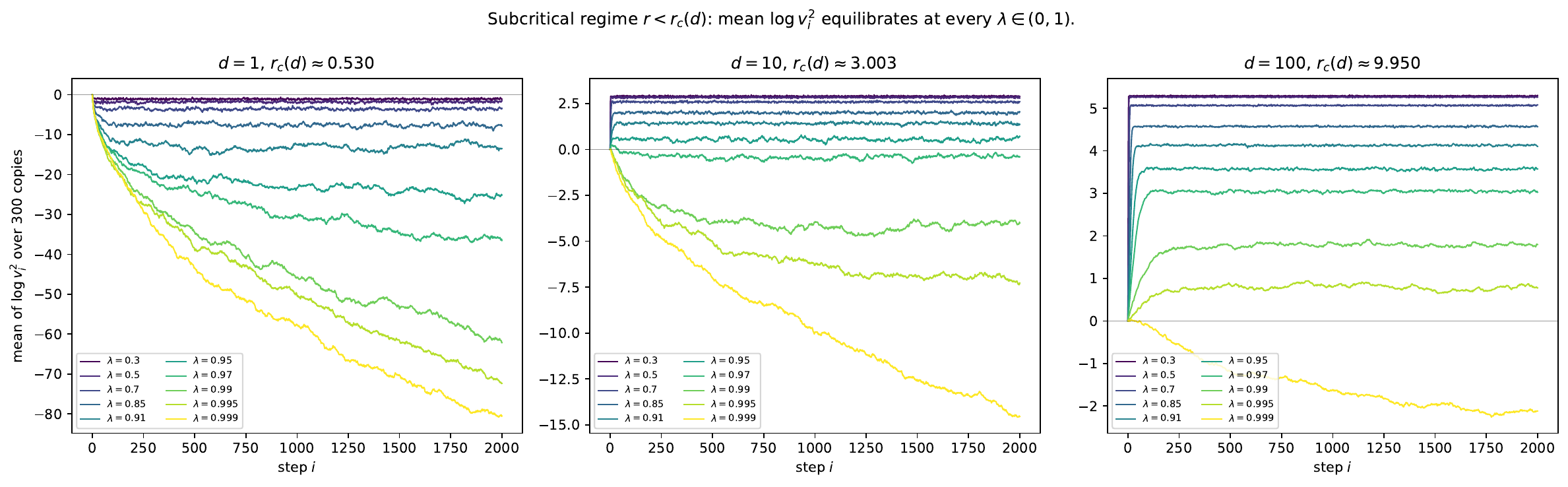}
  \caption{Subcritical regime $r < r_c(d)$: trajectories of $\log v_i^2$ for ten $\lambda$ values spanning $0.30$ to $0.999$, across $d = 1, 10, 100$, all panels at uniform depth $2000$. Each curve is the mean of $\log v_i^2$ over $300$ i.i.d.\ trajectories starting at $v_1 = 1$, colour-coded from low (purple) to high (yellow). The convergence to equilibrium claimed by \Cref{thm:sync_v_convergence_d}~(ii) is observed; it slows as $\lambda \uparrow 1$, especially at $d = 1$, where the empirical mean of $\log v^2$ continues drifting beyond the depth shown.}
  \label{fig:appendix_trajectories_below}
\end{figure}

\begin{figure}[t]
  \centering
  \includegraphics[width=0.95\textwidth]{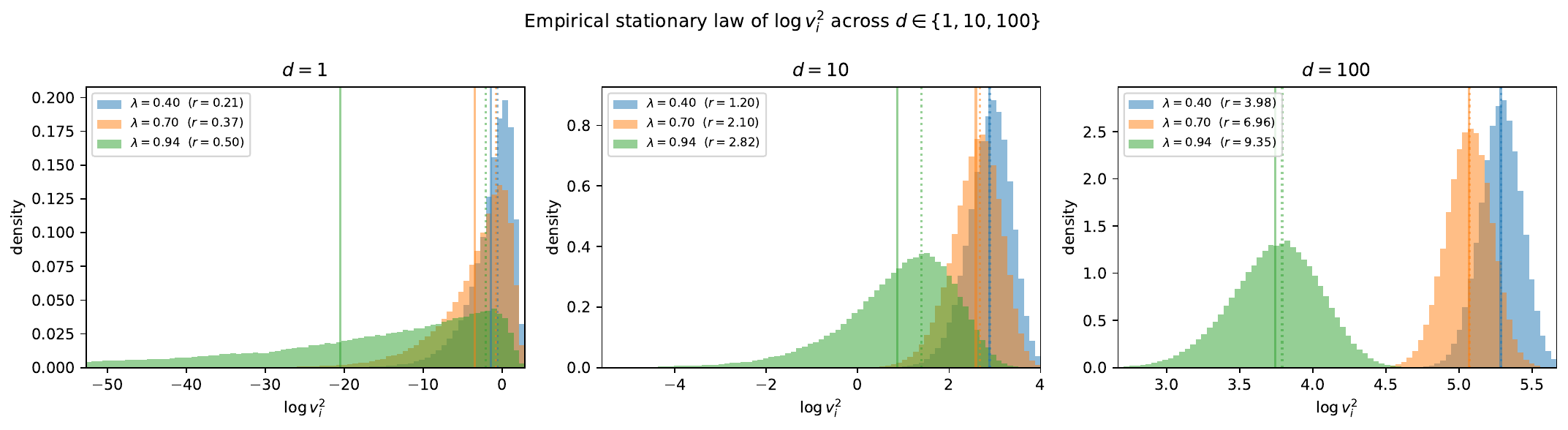}
  \caption{Empirical stationary law of $\log v_i^2$ at $\lambda \in \Set{0.40, 0.70, 0.94}$, overlaid per dimension ($d = 1, 10, 100$ left to right). Each histogram is from a chain of $2 \cdot 10^5$ samples after a $2 \cdot 10^4$-step burn-in; the $d = 1$ pane trims the leftmost $8\%$ of each sample to keep the bulk visible. Solid verticals: empirical stationary mean $\Expsubidx{\pi^{\bv}}{\log v^2}$ (from the full, unclipped sample). Dotted verticals: Jensen upper bound $L_*(r,d)$ of \Cref{cor:sync_log_stationary}~(b). The bound is nearly tight at $d = 100$ and loose at $d = 1, \lambda = 0.94$ (the heavy left tail pulls the empirical mean far below $L_*$). The $d = 100$ histograms are markedly narrower than at $d = 10$ or $d = 1$ because $\log X$ for $X \sim \chi^2_d$ concentrates at rate $d^{-1/2}$.}
  \label{fig:appendix_stationary_multidim}
\end{figure}

\begin{figure}[t]
  \centering
  \includegraphics[width=0.95\textwidth]{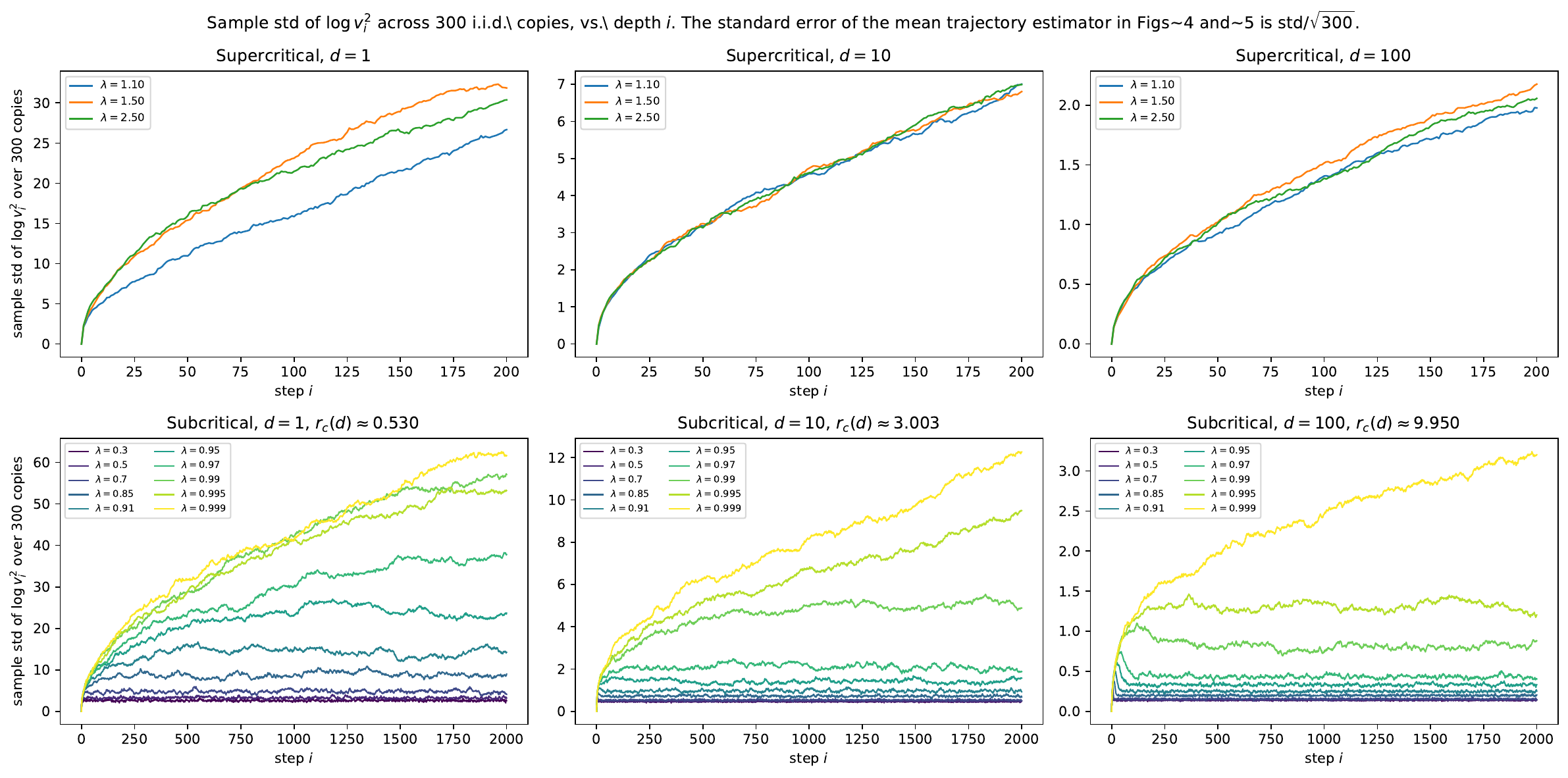}
  \caption{Sample std of $\log v_i^2$ across the $300$ i.i.d.\ copies used in \Cref{fig:appendix_trajectories_above,fig:appendix_trajectories_below}, vs.\ depth $i$. \emph{Top row:} supercritical regime, $\lambda \in \Set{1.1, 1.5, 2.5}$, depth $200$. \emph{Bottom row:} subcritical regime, $\lambda \in \Set{0.30, \ldots, 0.999}$, depth $2000$; the std saturates as the chain approaches its stationary law, with saturation level increasing in $\lambda$. The standard error of the mean trajectory estimator displayed in \Cref{fig:appendix_trajectories_above,fig:appendix_trajectories_below} is the std shown here divided by $\sqrt{300}$; this is below figure resolution at $d = 10, 100$ for every $\lambda$ used in the paper, and only the most near-critical $\lambda \in \Set{0.99, 0.995, 0.999}$ at $d = 1$ pushes the standard error to a fraction of the mean trajectory's range, so the empirical mean of $\log v^2$ at $d = 1$ near-critical $\lambda$ is an indicative estimate rather than a tight one.}
  \label{fig:appendix_trajectories_std}
\end{figure}

\begin{figure}[t]
  \centering
  \includegraphics[width=\textwidth]{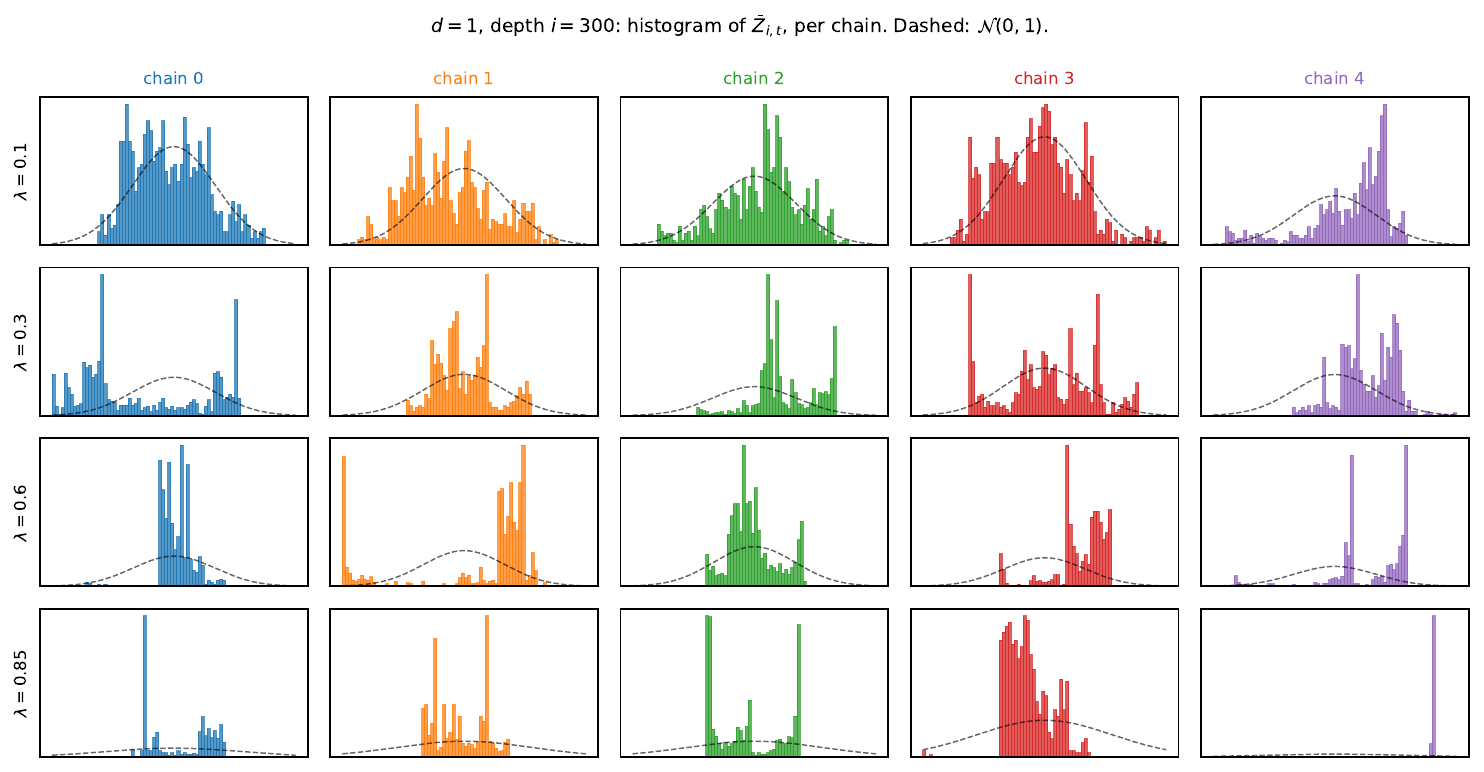}
  \caption{Per-chain histograms of $\bZ_{i,t} \in \RR$ at depth $i = 300$, $d = 1$. Rows: $\lambda \in \Set{0.10, 0.30, 0.60, 0.85}$. Dashed: $\mathcal N(0, 1)$. The marginal stays exactly $\mathcal N(0,1)$ at every depth, so any departure from the dashed curve is a within-chain dependence effect. The top row ($\lambda = 0.10$) is essentially indistinguishable from a $\mathcal N(0,1)$ sample; as $\lambda \uparrow 1$ the histograms become heavily peaked / asymmetric and vary visibly between chains.}
  \label{fig:appendix_pi_Z_hist_d1}
\end{figure}

\begin{figure}[t]
  \centering
  \includegraphics[width=\textwidth]{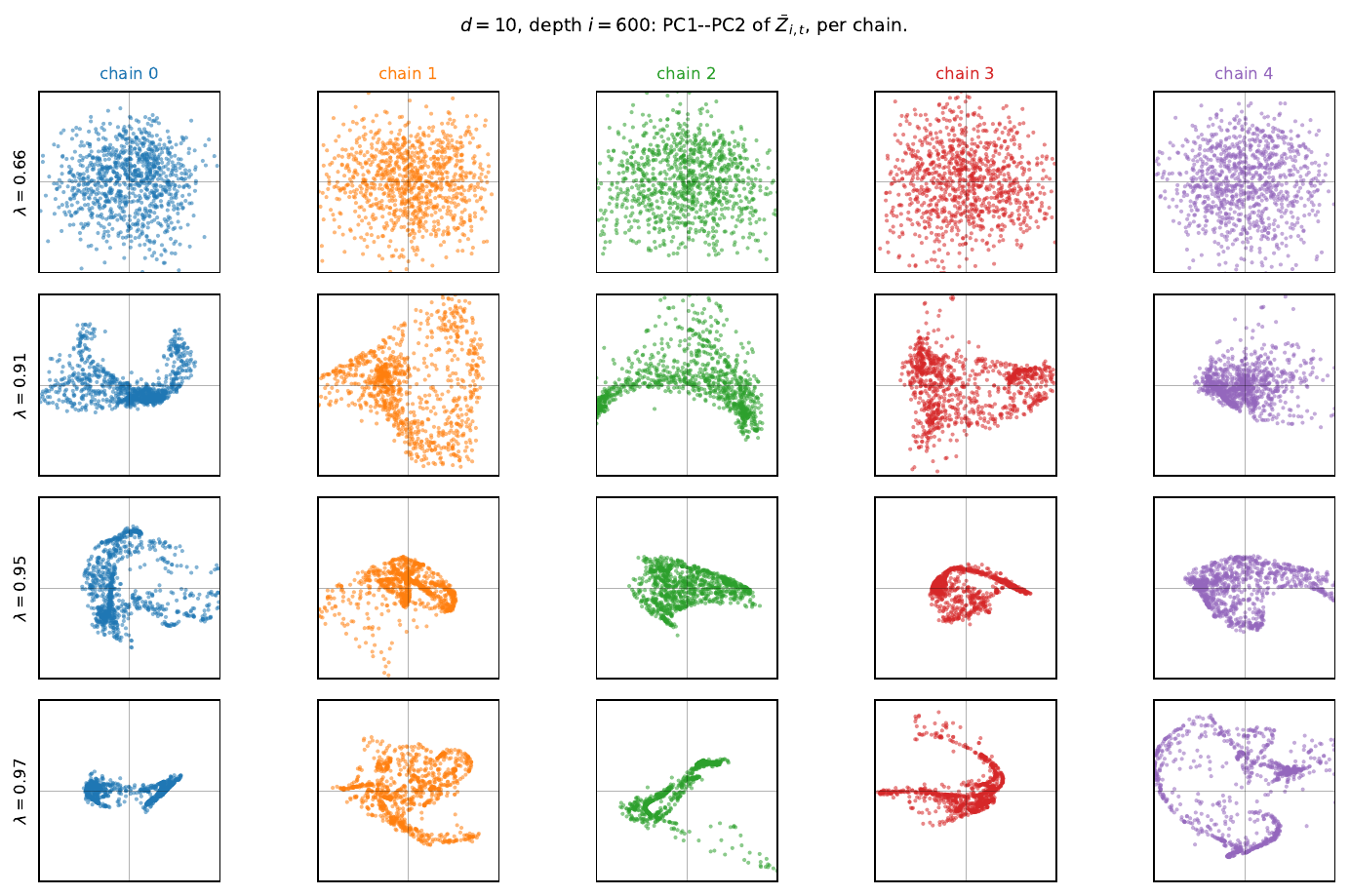}
  \caption{Per-chain PC1--PC2 scatters of $\bZ_{i,t} \in \RR^{10}$ at depth $i = 600$, five i.i.d.\ chains (columns); $n = 1000$ points per panel, centred and projected onto the top two PCs. Rows: $\lambda \in \Set{0.66, 0.91, 0.95, 0.97}$. The top two rows are still close to a Gaussian cloud; the lower rows show the cluster-then-ribbon transition characteristic of the wider visible-structure window at moderate $d$. See also \Cref{fig:appendix_pca_density_d10}, where we plot the \emph{density} of the points above, which makes the structure and multimodality even clearer.}
  \label{fig:appendix_pi_Z_pca_d10}
\end{figure}

\begin{figure}[t]
  \centering
  \includegraphics[width=\textwidth]{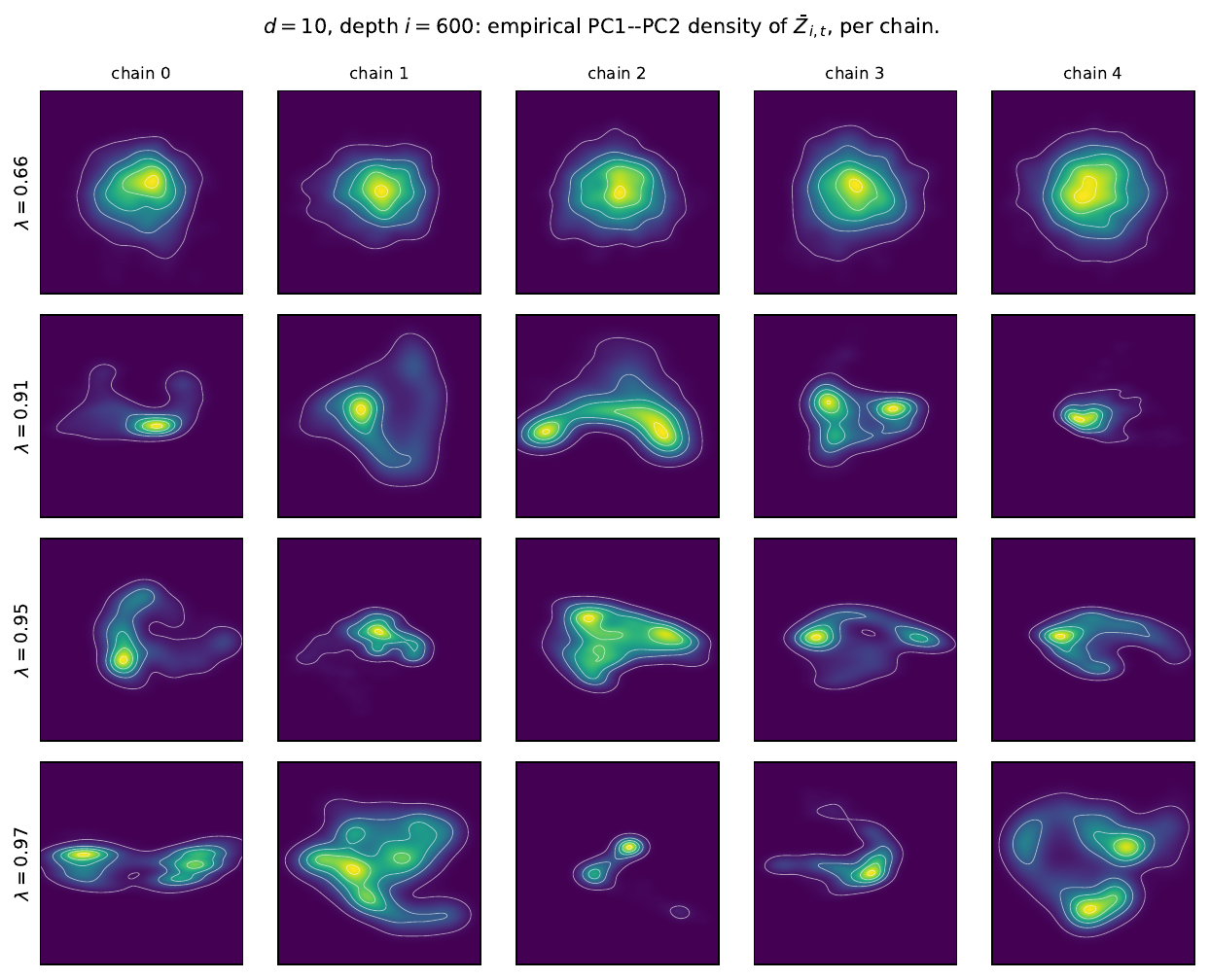}
  \caption{Empirical PC1--PC2 density (Gaussian KDE, Scott bandwidth) of the same per-chain projections shown in \Cref{fig:appendix_pi_Z_pca_d10}. The $\lambda = 0.66$ row is the near-independent Gaussian baseline; from $\lambda = 0.91$ upward the densities are clearly multimodal and chain-specific, and by $\lambda = 0.95, 0.97$ each chain shows two or three well-separated modes.}
  \label{fig:appendix_pca_density_d10}
\end{figure}

\begin{figure}[t]
  \centering
  \includegraphics[width=\textwidth]{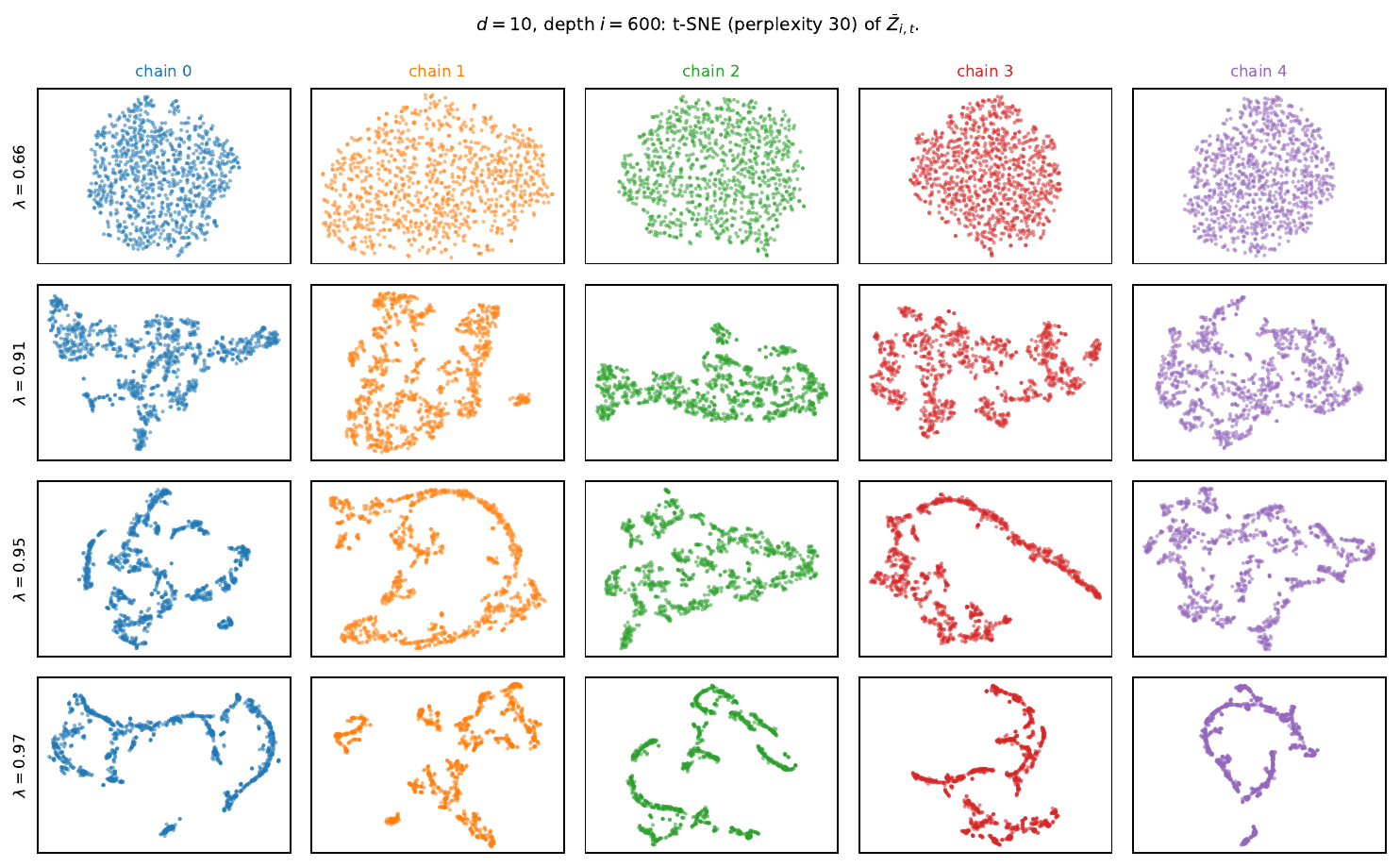}
  \caption{Per-chain t-SNE embeddings of $\bZ_{i,t}$ at $d = 10$, depth $i = 600$, perplexity $30$, PCA initialisation. Same $\lambda$ and chains as \Cref{fig:appendix_pi_Z_pca_d10}.}
  \label{fig:appendix_pi_Z_tsne_d10}
\end{figure}

\begin{figure}[t]
  \centering
  \includegraphics[width=\textwidth]{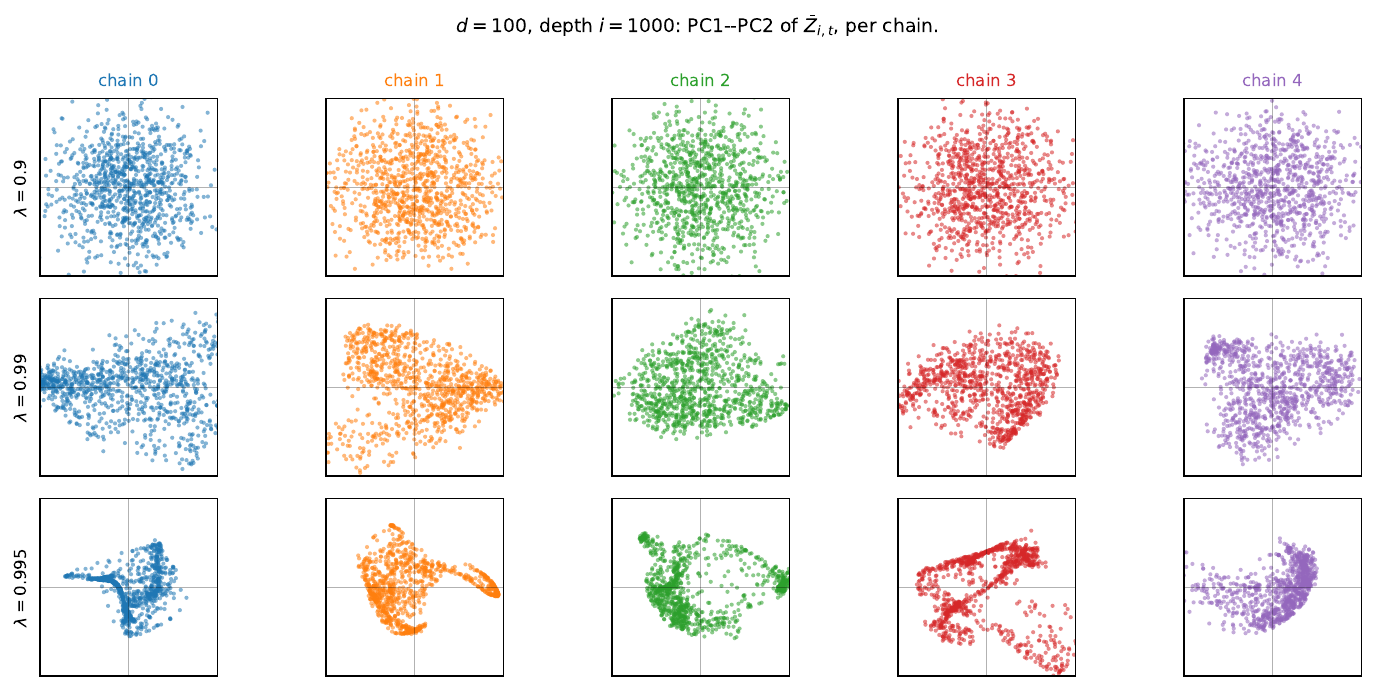}
  \caption{Per-chain PC1--PC2 scatters of $\bZ_{i,t} \in \RR^{100}$ at depth $i = 1000$, the linear-PCA companion to \Cref{fig:exp_pi_Z_tsne_main}. Same $\lambda$ values and same five i.i.d.\ chains as the main-paper t-SNE figure; $n = 1000$ points per panel, centred and projected onto the top two PCs. The chain-specific bananas, U-shapes, and arcs at $\lambda = 0.99, 0.995$ that appear in t-SNE are also visible here, so they reflect the data geometry rather than the embedding. See also \Cref{fig:appendix_pca_density_d100}, where we plot the \emph{density} of the points above, which makes the structure and multimodality even clearer.}
  \label{fig:appendix_pi_Z_pca_d100}
\end{figure}

\begin{figure}[t]
  \centering
  \includegraphics[width=\textwidth]{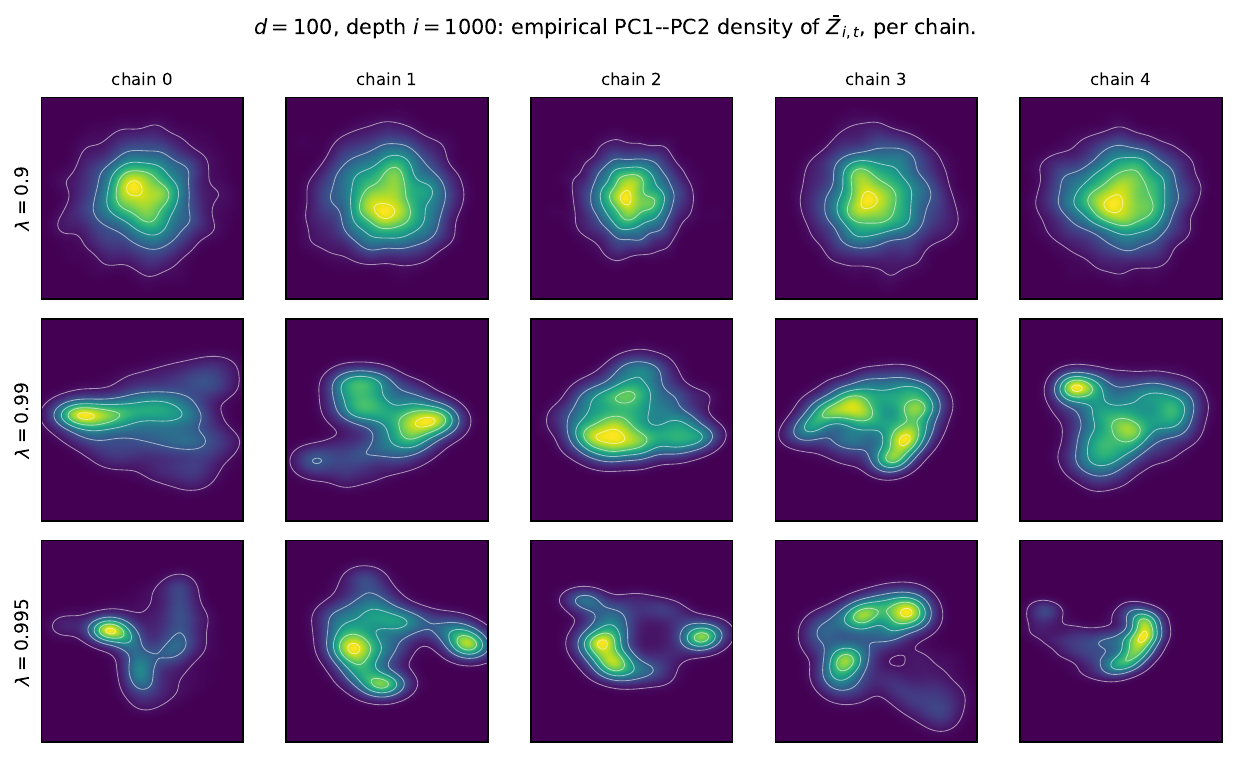}
  \caption{Empirical PC1--PC2 density (Gaussian KDE, Scott bandwidth) of the same per-chain projections shown in \Cref{fig:appendix_pi_Z_pca_d100}. The $\lambda = 0.90$ row is the near-independent Gaussian baseline; at $\lambda = 0.99$ and $\lambda = 0.995$ the densities show pronounced chain-specific multimodality with two or three well-separated peaks per chain.}
  \label{fig:appendix_pca_density_d100}
\end{figure}

\end{document}